%% file: main.tex
\documentclass{article}
\PassOptionsToPackage{numbers, compress}{natbib}

     \usepackage[preprint]{neurips_2025}

\usepackage[utf8]{inputenc} 
\usepackage[T1]{fontenc}    
\usepackage{hyperref}       
\usepackage{url}            
\usepackage{booktabs}       
\usepackage{amsfonts}       
\usepackage{nicefrac}       
\usepackage{microtype}      
\usepackage[dvipsnames]{xcolor}         
\usepackage{colortbl}
\usepackage{dsfont}
\usepackage{stmaryrd}       
\usepackage{graphicx}
\usepackage{xspace}
\usepackage{longtable}
\usepackage{amsthm}
\usepackage{researchpack}
\hypersetup{
    colorlinks=true,
    linkcolor=blue,
    citecolor=blue,
    urlcolor=blue
}

\newcommand{\firstacronym}{Full ABstention in multi-horizon FORecasting\xspace}
\newcommand{\secondacronym}{Partial ABstention in multi-horizon FORecasting\xspace}
\newcommand{\thirdacronym}{INTerval ABstention in multi-horizon FORecasting\xspace}
\newcommand{\firstmethod}{{\tt FAbFor}\xspace}
\newcommand{\secondmethod}{{\tt PAbFor}\xspace}
\newcommand{\thirdmethod}{{\tt IntAbFor}\xspace}
\newcommand{\MLP}{{\tt MLP}\xspace}
\newcommand{\LSTM}{{\tt LSTM}\xspace}

\newcommand{\AdaptiveCF}{{\tt AdaptiveCF}\xspace}
\newcommand{\DPRNN}{{\tt DP-RNN}\xspace}
\newcommand{\MQRNN}{{\tt MQ-RNN}\xspace}
\newcommand{\dummy}{{\tt Accept-cH}\xspace}

\usepackage{math_commands}

\title{Bounded-Abstention Multi-horion \\ Time series Forecasting}
\author{%
  Luca Stradiotti \\
  DTAI lab \& Leuven.AI, \\
  KU Leuven, Belgium \\
  \texttt{luca.stradiotti@kuleuven.be}\\
\And
  Laurens Devos \\
  DTAI lab \& Leuven.AI, \\
  KU Leuven, Belgium \\
  \texttt{laurens.devos@kuleuven.be}\\
  \And
  Anna Monreale \\
  Computer Science Dept, \\
 University of Pisa, Italy \\
  \texttt{anna.monreale@unipi.it}\\
\And
  Jesse Davis\thanks{Joint last authors. Order chosen by alphabetical order.} \\
  DTAI lab \& Leuven.AI, \\
  KU Leuven, Belgium \\
  \texttt{jesse.davis@kuleuven.be} \\
  \And
  Andrea Pugnana\footnotemark[1] \\
  DISI, \\
  University of Trento, Italy\\
  \texttt{andrea.pugnana@unitn.it} \\
}

\begin{document}
\maketitle

\begin{abstract}
Multi-horizon time-series forecasting involves simultaneously making predictions for a consecutive sequence of subsequent time steps. This task arises in many application domains, such as healthcare and finance, where mispredictions can have a high cost and reduce trust. The learning with abstention framework tackles these problems by allowing a model to abstain from offering a prediction when it is at an elevated risk of making a misprediction. Unfortunately, existing abstention strategies are ill-suited for the multi-horizon setting: they target problems where a model offers a single prediction for each instance. Hence, they ignore the structured and correlated nature of the predictions offered by a multi-horizon forecaster. We formalize the problem of learning with abstention for multi-horizon forecasting setting and show that its structured nature admits a richer set of abstention problems. Concretely, we propose three natural notions of how a model could abstain for multi-horizon forecasting.  We theoretically analyze each problem to derive the optimal abstention strategy and propose an algorithm that implements it. Extensive evaluation on 24 datasets shows that our proposed algorithms significantly outperforms existing baselines.

\end{abstract}
\input{texFiles/1_introduction}
\input{texFiles/2_theory}
\input{texFiles/3_implementation}
\input{texFiles/4_experiments}
\input{texFiles/5_relatedWork}
\input{texFiles/6_conclusion}

\section*{Impact Statement}
The deployment of multi-horizon forecasting models in high-stakes applications, such as healthcare and finance, is often hindered by the high cost of mistakes, which can have severe consequences. Enabling multi-horizon forecasters to abstain from making predictions in uncertain scenarios yields significant benefits across these domains.

First, by prioritizing the minimization of selective risk, our framework mitigates the potential consequences of erroneous forecasts, \eg preventing incorrect medical prognoses. By allowing the model to express uncertainty through full, partial, or interval abstention, we enhance the overall safety and reliability of automated decision-making systems. 

Second, our approach facilitates more efficient human-AI collaboration. By identifying specific time steps or intervals where the model is uncertain, we allow human experts to focus their attention and limited resources solely on these challenging steps. This reduces the cognitive load on practitioners, ensuring that human intervention is prioritized for cases where the model is unreliable.
\begin{ack}
LS and JD were supported by the Flemish Government through the “Onderzoeksprogramma Artificiële Intelligentie (AI) Vlaanderen” programme. JD was also supported by the KU Leuven Research Fund (iBOF/21/075). 
LD was supported by Flanders Make, the strategic research centre for the manufacturing industry.
AP and AM acknowledge their work has  been funded by the European Union under Grant Agreement No. 101120763 – TANGO. Views and opinions expressed are however those of the author(s) only and do not necessarily reflect those of the European Union or the European Health and Digital Executive Agency (HaDEA). Neither the European Union nor the granting authority can be held responsible for them.
 \end{ack}

\bibliography{references}

\appendix
\input{texFiles/Appendix}
\end{document}

%% file: texFiles/1_introduction.tex
\section{Introduction}
\begin{figure}[t!]
    \centering
    \includegraphics[width=1\linewidth]{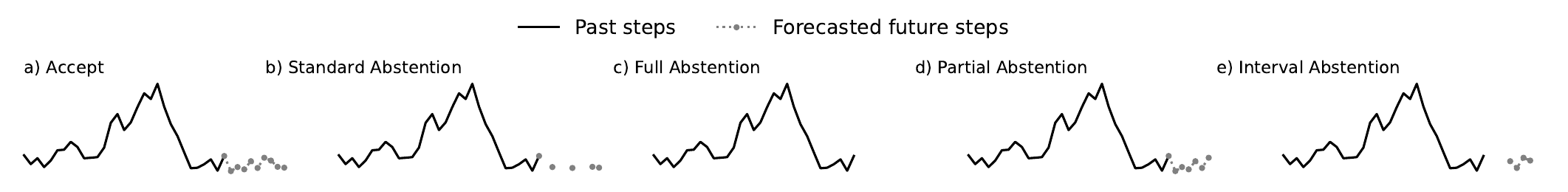}
    \caption{
    Illustration of possible forecasts when using abstention in a multi-horizon setting. (a) The model offers a forecast for the entire horizon, \ie the full forecast is accepted. (b) Standard abstention decides independently for each time step whether to accept or abstain from making a prediction, hence the forecast is not guaranteed to be provided for a contiguous time interval.  (c) Full abstention only allows rejecting  the forecast for the \emph{entire} horizon.  (d) Partial abstention lets the model provide a forecast for part of horizon: the forecast must start from the first time step but it may choose the end time step.  (e) Interval abstention offers the most flexibility: it can offer a forecast for any contiguous interval within the horizon, \ie it can choose any start and end point.
    }
    \label{fig:visual_abstract}
\end{figure}

\textit{Multi-horizon time-series forecasting} involves simultaneously making predictions for a consecutive sequence of subsequent time steps. This problem arises in a wide range of applications, including predicting asset prices~\citep{bao2017deep,selvin2017sotck}, providing weather forecasts~\citep{rasp2020weatherbench}, modeling a patient's prognosis~\citep{alaa2019attentive,lim2018forecasting}, and predicting how an athlete's skill will change over time~\citep{van2025forecasting}.

Many multi-horizon problems occur in high-stakes settings (\eg medicine) where incorrect predictions can have a high cost~\citep{rajkomar2019machine}. Therefore, it is appealing to allow a model to abstain from making a prediction in certain situations~\citep{DBLP:conf/aaai/RuggieriP25}. For example, abstaining may be appropriate when a model is at heightened risk of making a misprediction, \eg because it is highly uncertain about its prediction. In these cases, a decision could be deferred to a human or more information can be collected~\citep{hendrickx2024machine}. 

However, most existing abstention strategies target standard classification and regression problems where a single value (or class) is predicted for each instance. Unfortunately, these methods are ill-suited for the multi-horizon setting because they all treat each prediction separately. That is, for each time step in the horizon, they will independently decide whether to offer a prediction or abstain. This is disadvantageous because, on the one hand, it disregards the clear correlations between consecutive time steps. On the other hand, ignoring the structure of the predictions can lead to fragmented forecasts as illustrated in \Figref{fig:visual_abstract}b. However, in many applications it is necessary to receive a continuous (sub)sequence of predictions for future time steps to identify trends or the onset events. For instance, in weather forecasting, a user relies on a continuous sequence of daily predictions to effectively plan or pack for a trip.

Therefore, our goal is to generalize abstention strategies to account for the inherently richer and more structured nature of the multi-horizon setting. Concretely, we propose and explore three natural multi-horizon abstention settings:

\begin{description}
    \item[Full abstention] is where the forecast is either given for all time steps or none as illustrated in \Figref{fig:visual_abstract}c. This can be viewed as the natural adaptation of abstention for single step predictions to the multi-horizon setting because for each test time series, either a forecast is offered or it is not. For example, this may be appropriate when the test time series is overly noisy due to, \eg a poorly attached sensor that produces motion artifacts.  
    \item[Partial abstention] is where the forecast is offered for a consecutive sequence of time steps starting from the first time step in the prediction horizon as shown in \Figref{fig:visual_abstract}d. For example, a model may produce a more confident forecast about an athletes' performance for the next several seasons, but may highly uncertain about its estimates of later career performance as age-related declines vary substantially among athletes. 
    \item[Interval abstention] generalizes the partial abstention setting by offering a forecast for a consecutive sequence of time steps that may start from \emph{any point} in the prediction horizon as shown in \Figref{fig:visual_abstract}e. For example, in traffic forecasting, it may be difficult to estimate delays immediately following a crash,\footnote{This depends on severity, emergency services' proximity, etc.} but the model will be more certain about its predictions in hour as traffic should return to normal by then.  
\end{description}
We formalize each of these settings using the \textit{bounded-abstention framework}~\citep{franc2023optimal}, which aims to optimize forecasting performance subject to a target coverage constraint (\ie the proportion of examples for which a forecast is offered). For each abstention setting, we theoretically characterize the problem, provide the optimal abstention strategy, and propose an algorithm that approximates the optimal abstention strategy. We extensively validate our framework on $24$ real-world datasets, demonstrating that our approaches significantly outperform several baselines.

%% file: texFiles/2_theory.tex
\section{Formalizing bounded-abstention for multi-horizon time-series forecasting}\label{sec:theory}
We study the forecasting setting where the goal is to learn a forecasting model that generalizes to unseen time series (\eg learn from past patient data to provide forecasts for future patients). We assume time series of the form $y_{1:T}:= (y_1,y_2, ..., y_T)$, where $y_i \in \bbR^d $ and $i$ represents a time step. 

We define a \textit{selective multi-horizon forecaster} $m$ as a pair $(f,g)$, where
\begin{description}
    \item[$f:\calY^{1:T}\to \calY^{T+1:T+H}$] is a \textit{multi-horizon time-series forecaster} that given an input time series $y_{1:T}$, predicts the values of its subsequent $H$ time steps, where $H$ is called the forecast horizon;
    \item[$g:\calY^{1:T}\to \{1, \dots, H\} \times \{0, \dots, H\}$] is a \textit{selection function} that determines the window within the horizon $H$ for which a forecast is offered (accepted steps). For any time step outside this interval, the forecaster abstains (rejected steps). That is, $g\left(y_{1:T}\right) = (s,e)$ with $s \ge 1$ and $e \le H$ denoting the start and the end of the forecast interval. For ease of notation, we omit the explicit dependence on the input time-series and use $(s,e)$ instead of $(s(y_{1:T}), e(y_{1:T}))$. We adopt the convention that $(s=1,e=0)$ denotes that the entire horizon is rejected; apart from this exception we require $s \le e$.
\end{description}  
Formally, the output of $m$ for a time point $T+1 \le i \le T+H$ is defined as:
\begin{equation}\label{eq:general_modelwithreject_equation1}
m_i\left(y_{1:T}\right) = \begin{cases}
    \text{\textregistered} &\text{if $i < s \lor i > e$}\\
    f_i\left(y_{1:T}\right)  & \text{otherwise}.
\end{cases} 
\end{equation}
\noindent That is, the forecasts for the time steps that satisfy the first condition are \emph{rejected} whereas the forecasts for the remaining time steps are \emph{accepted}.

We measure the performance of a \textit{selective multi-horizon forecaster} using the \textit{selective risk}, \ie the expected loss over the accepted steps. Letting $h(g) = e - s+1$ be the number of accepted steps 
and $\phi(g) = \bbE_{Y_{1:T}}\left[h(g)\right]$, we define the selective risk as:
\begin{equation}
    \calR(f, g)=\frac{\bbE_{Y_{1:T+H}}\left[\sum_{t=T+s}^{T+e} \ell\left(y_t, f_t\left(y_{1: T}\right)\right)\right]}
    {\phi(g)}
    \label{eq:sr}
\end{equation}
where $\ell\left(y_t, f_t\left(y_{1: T}\right)\right)$ is the loss at time-step $t$. Intuitively, the numerator represents the \textit{total expected loss} over the accepted steps, while the denominator represents the \textit{expected length} of the forecast interval. To avoid degenerate solutions where the forecaster always abstains, we enforce a \textit{target coverage} $c$, requiring the model to offer forecasts for at least a fraction $c$ of the steps in the horizon on average. Therefore, the objective is to find a selection function $g$ that minimizes the \textit{selective risk} subject to this coverage constraint~\citep{pugnana2024deep}. The general optimization problem can be formalized as:
\begin{equation}
    g^* = \underset{g}{\arg \min} \; \calR(f,g) \quad \text{s.t.} \quad \phi(g) \ge cH.
    \label{eq:problem}
\end{equation}
We now define three different ways to impose structural constraints on how $g$ selects the indices $s$ and $e$, each of which yields a distinct bounded-abstention problem. We theoretically analyze each problem and derive the optimal selection function. In our analysis, we make two assumptions. First, we assume \emph{exchangeability across different time-series}, but we avoid assumptions about dependencies across time steps~\citep{stankeviciute2021conformal}. Second, we assume that $f$ produces all $H$ predictions simultaneously from the input sequence (\ie they are not generated autoregressively). This choice is motivated by the robustness of this approach to error accumulation and conditionally independent predictions~\citep{wen2018multi}.

\subsection{Full abstention for multi horizon forecasting}\label{sec:fullabstention}
We first consider abstention at the \textit{time-series level}:
\begin{definition}[Full Abstention]
        The selective forecaster must accept or reject the entire forecast horizon. Therefore, the selection function in the \textbf{full abstention setting} is defined as $g_{\texttt{FA}}: \mathcal{Y}^{1:T} \rightarrow \{(1,H), (1,0)\}$.
\end{definition}
The goal is to minimize the risk over the accepted time-series while ensuring that, on average, forecasts are provided for at least a fraction $c$ of the time-series.\footnote{This is equivalent to forecast $cH$ time steps on average.} 
Ideally, $g$ should abstain from those time-series with higher expected risk, \ie those for which the forecaster is most uncertain. The core challenge lies in evaluating the total expected risk across the entire horizon, while traditional abstention strategies evaluate it for a single prediction.  We now show that the optimal selection function abstains whenever the sum of conditional risks over the entire horizon exceeds a threshold:
\begin{theorem}
Given a forecaster $f$ and a target coverage $c \in (0,1]$, the optimal selection function $g^*_{\texttt{FA}}$ that is a solution to \Eqref{eq:problem} in full abstention is:
    \begin{equation}
        g^*_{\texttt{FA}}\left(y_{1:T}\right) = \begin{cases} 
        (1,H) & \text { if } \quad \sum_{t=T+1}^{T+H} \rho_t\left(y_{1:T}\right) < \tau^*_c \\ 
        (1,\xi)  & \text { if   } \quad \sum_{t=T+1}^{T+H} \rho_t\left(y_{1:T}\right) = \tau^*_c \\
        (1,0) & \qquad \; \text { otherwise }
        \end{cases}
        \label{eq:optimalg_setting1}
    \end{equation}
where $\xi$ is a random variable such that $\xi = H$ with probability $\kappa$ and $\xi = 0$ with probability $1-\kappa$, and
\begin{equation*}
    \rho_t\left(y_{1:T}\right) =\bbE_{Y_{T+1:T+H}|Y_{1:T}} \left[\ell\left(y_t, f_t(y_{1:T})\right) \mid Y_{1:T} = y_{1:T}\right]
\end{equation*}
    \begin{equation}
        \tau^*_c=\inf \left\{v\mid\bbE_{Y_{1:T}}\left[\bbone\left\{\sum\limits_{t=T+1}^{T+H}\rho_{t}\left(y_{1:T}\right) < v\right\}\right]\geq c\right\}
        \label{eq:tau}
    \end{equation}
\begin{equation*}
\label{eq:kappa}
\kappa = 
\begin{cases} 
    0 & \text{if} \quad  \bbP\left(\sum_{t=T+1}^{T+H}\rho_{t}\left(y_{1:T}\right) = \tau_c^*\right) = 0, \\
    \frac{c - \bbP\left(\sum_{t=T+1}^{T+H}\rho_{t}\left(y_{1:T}\right) < \tau_c^*\right)}{\bbP\left(\sum_{t=T+1}^{T+H}\rho_{t}\left(y_{1:T}\right) = \tau_c^*\right)} & \qquad \qquad \text{otherwise}.
\end{cases}
\end{equation*}
\label{th:setting1}
\end{theorem}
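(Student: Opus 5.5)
We reduce the problem to a one-dimensional selection problem on the scalar score $R(y_{1:T}) := \sum_{t=T+1}^{T+H}\rho_t(y_{1:T})$ and then use a Neyman--Pearson-type exchange argument.

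\textbf{Step 1: rewrite the objective in terms of an acceptance probability.} In full abstention every (possibly randomized) selection function is described by an acceptance probability $a(y_{1:T}) \in [0,1]$, namely the probability that $g_{\texttt{FA}}(y_{1:T})=(1,H)$. Then $h(g)=H\cdot\bbone\{\text{accept}\}$, so $\phi(g)=H\,\bbE[a(Y_{1:T})]$. Since the acceptance decision depends only on $y_{1:T}$, the tower property gives for the numerator of \Eqref{eq:sr}
\begin{equation*}
\bbE_{Y_{1:T+H}}\Big[a(Y_{1:T})\sum_{t}\ell(y_t,f_t(y_{1:T}))\Big]=\bbE_{Y_{1:T}}\big[a(Y_{1:T})R(Y_{1:T})\big].
\end{equation*}
This uses the assumption that $f$ is a fixed, non-autoregressive function of $y_{1:T}$, so that $\rho_t$ is well defined. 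The problem therefore becomes
\begin{equation*}
\min_a \frac{\bbE[aR]}{H\,\bbE[a]} \quad \text{s.t.} \quad \bbE[a]\ge c .
\end{equation*}

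\textbf{Step 2: the coverage constraint is tight at $c$.} We compare any feasible $a$ with $\bbE[a]>c$ to the candidate $a^*$ defined by \Eqref{eq:optimalg_setting1}. The definition of $\tau^*_c$ as an infimum, together with right-continuity of the CDF of $R$, gives $\bbP(R<\tau^*_c)\le c\le \bbP(R\le\tau^*_c)$. Hence $\kappa\in[0,1]$ and $\bbE[a^*]=\bbP(R<\tau^*_c)+\kappa\,\bbP(R=\tau^*_c)=c$. When $\bbP(R=\tau^*_c)=0$ we get $\bbP(R<\tau^*_c)=c$ directly.

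\textbf{Step 3: exchange inequality.} For any feasible $a$, consider
\begin{equation*}
\bbE\big[(a^*-a)(R-\tau^*_c)\big]\le 0 .
\end{equation*}
This holds because $a^*=1\ge a$ where $R<\tau^*_c$, $a^*=0\le a$ where $R>\tau^*_c$, and the integrand vanishes on $\{R=\tau^*_c\}$. Rearranging gives
\begin{equation*}
\bbE[aR]\ge \bbE[a^*R]+\tau^*_c\big(\bbE[a]-c\big).
\end{equation*}
We then need
\begin{equation*}
\frac{\bbE[aR]}{\bbE[a]}\ge\frac{\bbE[a^*R]}{c}.
\end{equation*}
Writing $r^*=\bbE[a^*R]/c$, it suffices to show $\bbE[a^*R]+\tau^*_c(\bbE[a]-c)\ge r^*\,\bbE[a]$. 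This is equivalent to $(\tau^*_c-r^*)(\bbE[a]-c)\ge0$. It holds because $r^*$ is an average of $R$ over a region where $R\le\tau^*_c$, so $r^*\le\tau^*_c$, and because $\bbE[a]\ge c$. The factor $H$ cancels throughout.

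\textbf{Main obstacle.} The delicate point is Step 3 for the ratio objective. A Lagrangian argument only handles the linear functional, so the reduction to $(\tau^*_c-r^*)(\bbE[a]-c)\ge0$ is the crux. Secondary care is needed for atoms of $R$, which is exactly why the randomized choice $\xi$ and the weight $\kappa$ appear. We also need to verify measurability and integrability of $R$ (e.g.\ nonnegative losses with finite expectation).
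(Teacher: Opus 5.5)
Your proof is correct, but it is organized differently from the paper's.

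\textbf{The paper's route.} The paper works with the deterministic end step $e\in\{0,H\}$. It runs a three-case exchange argument meant to show that every deviation from $g^*_{\texttt{FA}}$ is \emph{strictly} worse:
\begin{enumerate}
\item In Case~1, a rule that does not always accept on $\{R<\tau^*_c\}$ is improved by moving acceptance mass from a set where $R\ge\tau^*_c$ into $\{R<\tau^*_c\}$, at fixed coverage.
\item In Case~2, a rule differing only in the tie-breaking probability is handled, when it over-covers, by the rescaling $e'=\frac{cH}{\phi(\bar e)}\bar e$. This leaves the ratio unchanged and reduces to Case~1.
\item Case~3 (accepting above the threshold) is sent back to Case~2a.
\end{enumerate}

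\textbf{Your route.} You encode every randomized rule by $a(y_{1:T})\in[0,1]$. You then dispose of all competitors at once with the single pointwise inequality $(a^*-a)(R-\tau^*_c)\le 0$. You pass from the resulting linear bound to the ratio objective via $(\tau^*_c-r^*)(\mathbb{E}[a]-c)\ge 0$. This replaces both the case split and the rescaling trick. The rescaling in fact produces non-integer $e'$, i.e.\ it already lives in your relaxed class.

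\textbf{What your version adds.} You verify $\mathbb{P}(R<\tau^*_c)\le c\le \mathbb{P}(R\le\tau^*_c)$, hence $\kappa\in[0,1]$ and exact coverage $c$; the paper uses this without proof. The strictness the paper is after can also be read off your chain of inequalities. Equality forces $a=a^*$ almost surely off $\{R=\tau^*_c\}$, and either $\mathbb{E}[a]=c$ or $r^*=\tau^*_c$. So the optimizer is unique only up to tie-breaking, plus over-acceptance at ties in the degenerate case $r^*=\tau^*_c$. That is a more accurate statement than the paper's case analysis suggests.

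\textbf{Minor loose end.} For $c=1$ with unbounded $R$ you get $\tau^*_c=+\infty$. There feasibility forces $a=1$ almost surely, so the claim is trivial.
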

The proof can be found in Appendix~\ref{app:proof_th1}.  

Intuitively, the optimal solution is to abstain for those time-series for which the sum of conditional risks is greater than the $c$-th quantile of the distribution of the sum of conditional risks. Notice that $\xi$ acts as a probabilistic tie-breaker, ensuring coverage constraint is exactly met when the distribution of the sum of the conditional risks is not continuous.

\subsection{Partial abstention for multi horizon forecasting}\label{sec:partialabstention}
Abstaining at the time-series level could be too strict. In some cases, the forecaster may provide accurate predictions for the initial time steps but becomes increasingly uncertain over time. Hence, it can be beneficial to abstain for all predictions after a given time step.
\begin{definition}[Partial Abstention]
    The selective forecaster offers a prediction from $s=1$ to $e$, where the chosen $e$ differs for each considered time series. Therefore, the selection function in the  \textbf{partial abstention setting}
    is defined as $g_{\texttt{PA}}\left(y_{1:T}\right) : \calY^{1:T} \rightarrow \{(1,0), \dots, (1,H)\}$.
\end{definition}

The goal is to minimize the selective risk while ensuring that forecasts are provided for at least a fraction $c$ of the horizon when averaged over all time series. Hence, for each time series, the selective forecaster must consider the trade off between the increase in the conditional risk associated with a longer prediction with the need to meet the global coverage requirement.
We formalize this intuition by showing that the optimal strategy arises from a Lagrangian relaxation, where the selection function balances the cumulative conditional risk against a linear reward for the forecast length.
\begin{theorem}
\label{th:setting2}
Assume the joint vector $\left(\rho_{T+1}(y_{1:T}), \dots, \rho_{T+H}(y_{1:T})\right)$ is absolutely continuous. Given a forecaster $f$ and a target coverage $c \in (0,1]$, the optimal selection function $g^*_{\texttt{PA}}$ in partial abstention returns a pair of indices $\left(s^*, e^*\right)$, such that $s^* = 1$ and:
\begin{equation}
e^* = \underset{e \in \{0, \dots, H\}}{\arg \min} \left( \sum_{t=T+1}^{T+e}\rho_t\left(y_{1:T}\right) - \gamma^*e \right)
\label{eq:optimalg_setting2}
\end{equation}
where $\gamma^* = \lambda^* + \eta^*$, $\lambda^* = \calR\left(f, g^*_{\texttt{PA}}\right)$ is the optimal risk, and $\eta^*$ is the Lagrangian multiplier which is chosen such that the pair $(e^*, \eta^*)$ satisfies the Karush–Kuhn–Tucker (KKT) conditions:
\begin{equation*}
\begin{gathered}   
\eta^* \ge 0, \quad \qquad
\bbE_{Y_{1:T}}\left[e^*\right] \geq cH, \qquad
\eta^*\left(\bbE_{Y_{1:T}}\left[e^*\right] - cH\right) = 0.
\end{gathered}
\end{equation*}
\end{theorem}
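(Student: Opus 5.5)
We reduce the ratio-type objective in \Eqref{eq:problem} to a linear one via the Dinkelbach characterization of fractional programs, and then treat the coverage constraint with a Lagrangian that can be minimized pointwise in $y_{1:T}$. First, we use exchangeability and the tower property to rewrite the numerator of \Eqref{eq:sr} with $s=1$ as $\bbE_{Y_{1:T}}\bigl[\sum_{t=T+1}^{T+e}\rho_t(y_{1:T})\bigr]$, exactly as in the proof of Theorem~\ref{th:setting1}. Since $h(g)=e$, the denominator is $\phi(g)=\bbE[e]$. Let $\lambda^*$ be the optimal value of \Eqref{eq:problem} over selection functions $g_{\texttt{PA}}$ (possibly randomized). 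A standard Dinkelbach argument then shows that $g^*$ is optimal for the ratio problem if and only if it minimizes
\[
\bbE\Bigl[\textstyle\sum_{t=T+1}^{T+e}\rho_t - \lambda^* e\Bigr] \quad \text{subject to} \quad \bbE[e]\ge cH,
\]
and that the minimal value of this linear problem is $0$. To see this, note that for any feasible $g$ we have $\bbE[\sum\rho]-\lambda^*\bbE[e]=\bbE[e]\bigl(\calR(f,g)-\lambda^*\bigr)\ge 0$, with equality exactly at an optimum. Here $\bbE[e]\ge cH>0$, so we never divide by zero.

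Second, we handle the linear problem with a Lagrangian. Introduce $\eta\ge 0$ and set
\[
L(g,\eta)=\bbE\Bigl[\textstyle\sum_{t=T+1}^{T+e}\rho_t-(\lambda^*+\eta)e\Bigr]+\eta cH .
\]
The objective and the constraint are both linear in the randomized policy, that is, in the distribution of $e$ given $y_{1:T}$. The feasible set is convex, and Slater's condition holds by taking $e\equiv H$, which gives $\bbE[e]=H>cH$ when $c<1$; the case $c=1$ forces $e\equiv H$ directly. Strong duality therefore holds, and there is a multiplier $\eta^*\ge 0$ satisfying the stated KKT conditions, i.e.\ feasibility and complementary slackness. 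For fixed $\eta^*$, $L$ is minimized pointwise: for each $y_{1:T}$ we choose $e\in\{0,\dots,H\}$ minimizing $\sum_{t=T+1}^{T+e}\rho_t-\gamma^* e$ with $\gamma^*=\lambda^*+\eta^*$. This is exactly \Eqref{eq:optimalg_setting2}.

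Third, we justify that the pointwise argmin is deterministic and attains the KKT conditions. This is where the absolute continuity assumption enters, and we expect it to be the main obstacle. Ties between two candidates $e\neq e'$ occur on the set $\{\sum_{t=T+e+1}^{T+e'}\rho_t=\gamma^*(e'-e)\}$, which is a hyperplane in $\bbR^H$ and hence has probability zero. The argmin is therefore a.s.\ unique, so no randomized tie-breaker like $\xi$ in Theorem~\ref{th:setting1} is needed. It also follows that $\gamma\mapsto\bbE[e^*_\gamma]$ is nondecreasing, since the minimizer moves right as $\gamma$ grows, and continuous by dominated convergence, since boundaries have measure zero. Consequently:
\begin{itemize}
\item If $\bbE[e^*_{\lambda^*}]\ge cH$, we take $\eta^*=0$.
\item Otherwise, the intermediate value theorem gives $\eta^*>0$ with $\bbE[e^*_{\lambda^*+\eta^*}]=cH$.
\end{itemize}
In either case complementary slackness holds. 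The delicate point is the circularity: $\lambda^*$ is defined as the optimal risk of the very $g^*$ being constructed. We resolve it by fixing $\lambda^*$ as the abstract optimal value first, whose existence and finiteness follow because the objective is bounded below by $0$. Only afterwards do we construct $g^*$ and verify that its risk equals $\lambda^*$, using the zero optimal value of the Dinkelbach problem.

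Finally, we check optimality directly. For any feasible $g$ and the constructed $g^*$,
\[
\bbE\Bigl[\textstyle\sum^{e}\rho-\lambda^* e\Bigr]\ \ge\ L(g,\eta^*)\ \ge\ L(g^*,\eta^*)\ =\ \bbE\Bigl[\textstyle\sum^{e^*}\rho-\lambda^* e^*\Bigr].
\]
The first inequality uses $\eta^*\ge0$ together with $\bbE[e]\ge cH$, the second is the pointwise minimality of $e^*$, and the final equality is complementary slackness. The left side is $\ge 0$ by the definition of $\lambda^*$, and the right side is $\le 0$ by strong duality. Hence $\calR(f,g^*)=\lambda^*$, and $g^*$ is optimal.
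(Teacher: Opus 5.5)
Your proposal is correct and follows essentially the same route as the paper: Dinkelbach linearization of the ratio (Lemma~\ref{lem:fractional_to_linear}), a Lagrangian with $\gamma^*=\lambda^*+\eta^*$ minimized pointwise in $y_{1:T}$, monotonicity of the expected coverage in $\gamma$ (Lemma~\ref{lem:monotonicity_gamma}), continuity from absolute continuity, and the intermediate value theorem to fix $\eta^*$. You are more careful than the paper in three places: ties (the hyperplane argument), the circular definition of $\lambda^*$, and the closing sandwich argument. In that last step, $\bbE\bigl[\textstyle\sum^{e^*}\rho-\lambda^* e^*\bigr]\le 0$ already follows from your chain applied to near-optimal feasible $g$ (weak duality plus the definition of $\lambda^*$ as an infimum), so the Slater/strong-duality step is not needed.
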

The proof can be found in Appendix~\ref{app:proof_th2}. 

Intuitively, the parameter $\gamma^*$ represents the reward of forecasting an additional step. For each test time series, the selective forecaster will provide a prediction for an additional time step only if the associated marginal increase in the cumulative conditional risk is lower than $\gamma^*$.

\subsection{Interval abstention for multi horizon forecasting}\label{sec:intervalabstention}
Finally, we generalize the partial abstention setting by not forcing the provided forecast to begin with first time step.
\begin{definition}[Interval Abstention]
    The selective forecaster selects an interval from $s$ to $e$ within the horizon and offers a prediction for all time steps in that interval. Therefore, the selection function in the \textbf{interval abstention setting} is defined as $g_{\texttt{IA}}\left(y_{1:T}\right) : \calY^{1:T} \rightarrow \{1, \dots, H\} \times \{0, \dots, H\}$. Note that the end time step $e$ must be after the start time step $s$ with the exception that (1,0) is possible (\ie the entire forecast is rejected).
\end{definition}
The objective is to identify the interval that minimizes the selective risk, while ensuring that, on average, the model provides predictions for intervals of length at least $cH$.
We now show that the optimal $g^*_{\texttt{IA}}$ is the minimizer of a Lagrangian trade-off between the cumulative conditional risk and a linear penalty on the interval length.
\begin{theorem}
\label{th:setting3}
Assume the joint vector of conditional risks $(\rho_{1}(y_{1:T}), \dots, \rho_{H}(y_{1:T}))$ is absolutely continuous. Given a forecaster $f$ and a target coverage $c \in (0,1]$, the optimal selection function $g^*_{\texttt{IA}}\xspace$ in interval abstention returns a pair of indices $(s^*, e^*)$ defined as:
\begin{equation}
    (s^*, e^*) = \underset{s \ge 1, e \le H }{\arg \min} \left( \sum_{t=s}^{e} \rho_t(y_{1:T}) - \gamma^*\cdot (e - s + 1)\right)
\label{eq:optimalg_setting3}
\end{equation}
where $\gamma^* = \lambda^*+\eta^*$, and $\lambda^* = \calR(f, g^*_{\texttt{IA}})$ is the optimal risk. The Lagrange multiplier $\eta^*$ is chosen such that the pair $(e^*, \eta^*)$ satisfies the KKT conditions: 
\begin{equation*}
\begin{gathered}   
\eta^* \ge 0, \quad \qquad \bbE_{Y_{1:T}}\left[e^*\right] \geq cH, \qquad
\eta^*\left(\bbE_{Y_{1:T}}\left[e^*\right] - cH\right) = 0.
\end{gathered}
\end{equation*}
\end{theorem}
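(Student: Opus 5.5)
The plan is to reduce the fractional program \Eqref{eq:problem}, restricted to interval-valued selection functions, to a linear program over selection functions, and then to solve that program pointwise through a Lagrangian, as in Theorem~\ref{th:setting2}.

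\textbf{Step 1: rewrite the objective.} By the tower property and the definition of $\rho_t$, the numerator of \Eqref{eq:sr} equals $\bbE_{Y_{1:T}}\left[\sum_{t=s}^{e}\rho_t(y_{1:T})\right]$. The loss therefore depends on the future only through the vector of conditional risks.

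\textbf{Step 2: Dinkelbach linearisation.} Let $\lambda^*$ be the optimal value. We show that $g^*$ solves \Eqref{eq:problem} if and only if $g^*$ minimises
\[
\bbE\Bigl[\textstyle\sum_{t=s}^{e}\rho_t - \lambda^*(e-s+1)\Bigr]
\quad\text{subject to}\quad \bbE[e-s+1]\ge cH,
\]
with minimum value $0$. The argument is standard:
\begin{itemize}
\item For every feasible $g$ we have $\calR(f,g)\ge\lambda^*$, which is equivalent to $N(g)-\lambda^*\phi(g)\ge 0$ because $\phi(g)\ge cH>0$.
\item Equality holds exactly at the optimisers.
\end{itemize}

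\textbf{Step 3: Lagrangian and pointwise minimisation.} Introduce a multiplier $\eta\ge0$ for the coverage constraint. The Lagrangian is
\[
L(g,\eta)=\bbE\Bigl[\textstyle\sum_{t=s}^{e}\rho_t-(\lambda^*+\eta)(e-s+1)\Bigr]+\eta cH .
\]
The admissible set of pairs $(s,e)$, namely all $s\le e$ together with $(1,0)$, is finite and does not depend on $y_{1:T}$. Hence minimising $L$ over measurable $g$ decouples across inputs: for each $y_{1:T}$ we choose $(s,e)$ minimising $\sum_{t=s}^e\rho_t-\gamma(e-s+1)$, which is exactly \Eqref{eq:optimalg_setting3} with $\gamma=\lambda^*+\eta$.

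Absolute continuity of $(\rho_1,\dots,\rho_H)$ makes ties between distinct intervals occur with probability zero. Any two distinct intervals differ by a nontrivial linear form in $\rho$ plus a constant, and such a form vanishes only on a hyperplane, which is a null set. So the argmin is almost surely unique, and no randomised tie-breaker like $\xi$ in Theorem~\ref{th:setting1} is needed.

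\textbf{Step 4: existence of $\eta^*$ satisfying KKT (main obstacle).} Define $\psi(\gamma)=\bbE[h(g_\gamma)]$, where $g_\gamma$ is the pointwise minimiser from Step 3.
\begin{itemize}
\item \emph{Monotonicity.} $\psi$ is nondecreasing in $\gamma$ by a standard exchange argument. If $\gamma'>\gamma$, optimality at $\gamma$ and at $\gamma'$ gives
\[
(\gamma'-\gamma)\bigl(h_{\gamma'}-h_\gamma\bigr)\ge 0
\]
pointwise, so $h_{\gamma'}\ge h_\gamma$.
\item \emph{Continuity.} $\psi$ is continuous, because the set of $\rho$ at which the argmin switches for a given $\gamma$ is a finite union of hyperplanes and hence null. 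Dominated convergence then gives continuity.
\item \emph{Limits.} $\psi(\gamma)\to H$ as $\gamma\to\infty$, since for large $\gamma$ the full interval wins except on a set of vanishing probability. Also $\psi$ is small for small $\gamma$.
\end{itemize}
Two cases follow. If the unconstrained choice $\gamma=\lambda^*$ already satisfies $\psi\ge cH$, set $\eta^*=0$. Otherwise, by the intermediate value theorem pick $\eta^*>0$ with $\psi(\lambda^*+\eta^*)=cH$. In both cases complementary slackness holds, and we read the theorem's ``$\bbE[e^*]$'' as $\bbE[e^*-s^*+1]$, which is the coverage $\phi$.

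\textbf{Step 5: optimality.} Conclude by weak duality: for any feasible $g$,
\[
\bbE\Bigl[\textstyle\sum\rho-\lambda^* h(g)\Bigr]\;\ge\; L(g,\eta^*)-\eta^*\bigl(cH-\phi(g)\bigr)\;\ge\; L(g,\eta^*)\;\ge\; L(g^*,\eta^*),
\]
and $L(g^*,\eta^*)$ equals the objective at $g^*$ by complementary slackness. So $g^*$ is optimal for the linearised problem of Step 2.

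The subtle point is self-consistency: $\lambda^*$ enters $\gamma^*$ while also being defined as $\calR(f,g^*_{\texttt{IA}})$. I would handle this as in Theorem~\ref{th:setting2}. Existence of an optimum gives a well-defined $\lambda^*$, and Step 2 then shows that the minimiser built with this $\lambda^*$ attains risk $\lambda^*$, so the fixed point is automatic.
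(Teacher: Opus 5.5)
Your proposal follows the paper's proof essentially step by step: Dinkelbach linearisation (Lemma~\ref{lem:fractional_to_linear}), a Lagrangian with $\gamma=\lambda^*+\eta$ minimised pointwise over intervals, and monotonicity plus continuity of the expected length in $\gamma$ to fix $\eta^*$ by the intermediate value theorem. The paper additionally splits the pointwise problem into ``best start for each length, then best length'', but that is only a computational convenience. Your version is more careful than the paper's: it gives almost-sure uniqueness of the argmin, continuity via dominated convergence, sufficiency via weak duality and complementary slackness, and the correct reading of $\bbE[e^*]$ as $\bbE[e^*-s^*+1]$. The one loose end, shared with the paper, is $c=1$: there the intermediate value step needs $\psi(\gamma)=H$ at some finite $\gamma$, which fails whenever some conditional risk $\rho_t$ is essentially unbounded.
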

The proof can be found in Appendix~\ref{app:proof_th3}. 

Again, the parameter $\gamma^*$ captures the expected reward for predicting another step. However, for each test time series, the optimal selective forecaster now will choose the start and end step of the contiguous interval whose cumulative conditional risk is smallest after subtracting the reward $\gamma^*$ for each accepted step.

%% file: texFiles/3_implementation.tex
\section{Learning a Selective Forecaster}\label{sec:implementation}
We now describe how to learn a selective forecaster from data. This requires picking an estimator of the conditional risk, learning a forecasting model, and learning the selection function. The risk estimator and forecaster do not depend on the considered abstention settings, whereas the selection function does.

For the risk estimator, we use the Mean Squared Error (MSE) over the accepted steps, which is a natural choice in multi-horizon forecasting~\citep{zhou2021informer,challu2023nhits}. Thus, the conditional risk coincides with the conditional variance.\footnote{Other risk choices require estimating a different quantity.} To estimate this quantity, we assume access to a dataset $\calD_{\text{train}} = \left\{\left(y^i_{1:T}, y^i_{T+1:T+H}\right), i = 1, \dots, n\right\}$, consisting of $n$ exchangeable time series with observed past and future values. We construct an estimator $\hat{\rho}_t(y_{1:T})$ for each time step $t$ by solving a regression task of the output variable 
$$\hat{\rho}_t\left(y_{1:T}\right)= \hat{\sigma}^2_t(y_{1:T})= \left(y_t - \hat{f}_t\left(y_{1:T}\right)\right)^2$$
where $\hat{\sigma}^2_t$ represents the conditional variance at time $t$. This quantity can be estimated by using a joint training approach, where a single neural network backbone is equipped with two output heads: one head outputs the $H$ future steps $\hat{f}_t$ with $t \in \{T+1, \dots, T+H\}$, and the other outputs the $H$ future step conditional variances $\hat{\sigma}^2_t$ associated with the prediction $\hat{f}_t$. This model can be effectively trained using the $\beta$-NLL loss, which emphasizes well-calibrated variance estimates~\citep{seitzer2022pitfalls}:
\begin{align}
     \;\! L_{{\beta}-{NLL}}\left(y, \hat{f}(y_{1:T}), \hat\sigma^2(y_{1:T})\right)\! = 
     \!\!\!\sum_{t=T+1}^{T+H}\!\!\! \operatorname{s}\!\left(\hat{\sigma}_t^{2\beta}(y_{1:T})\right)\!\!
      \left( \frac{\log\left(\hat{\sigma}_t^2(y_{1:T})\right)}{2}\!+\!\frac{\left(y_t\!-\!\hat{f}_t(y_{1:T})\right)\!^2}{2\hat{\sigma}_t^2(y_{1:T})} \right) 
      \label{eq:loss} 
\end{align}
where $\operatorname{s}(\cdot)$ denotes the stop gradient operation and $\beta$ controls the strength of the weighting.

All three abstention strategies rely on these variance estimates $\hat{\sigma}^2_t$. Then, they enforce the coverage constraint using a separate calibration set $\calD_{\text{calib}} = \{y^{n+i}_{1:T}\}_{i = 1}^m$ of $m$ exchangeable time series.\footnote{Since the thresholds and parameters depend only on the input $y_{1:T}$, future values are not strictly required for calibration.}

\paragraph{Full Abstention} \firstmethod (\firstacronym) approximates the optimal selection function derived in~\Thref{th:setting1}. We enforce the coverage constraint by computing $\sum_{t=T+1}^{T+H}\hat{\sigma}^2_t(y_{1:T})$ for all series in $\calD_{calib}$ and setting $\hat{\tau}_c$ as the empirical $c$-quantile of these scores (see~\Eqref{eq:tau}). Thus the learned selection function accepts (in expected value) a fraction $c$ of the forecasts, thereby satisfying the coverage constraint, and can be written as:
\begin{equation*}
    \hat{g}_{\texttt{FA}}\left(y_{1:T}\right) = \begin{cases} 
    (1,H) & \text { if } \quad \sum_{t=T+1}^{T+H} \hat{\sigma}^2_t\left(y_{1:T}\right) < \hat{\tau}_c \\ 
    (1,\hat{\xi})  & \text { if   } \quad \sum_{t=T+1}^{T+H} \hat{\sigma}^2_t\left(y_{1:T}\right) = \hat{\tau}_c \\
    (1,0) & \qquad \; \text { otherwise }
\end{cases}
\end{equation*}
where $\hat{\xi}$ is a random variable such that $\hat{\xi} = H$ with probability $\hat{\kappa}$ and $\hat{\xi} = 0$ with probability $1-\hat{\kappa}$, and
\begin{equation*}
\hat{\kappa} = 
\begin{cases}
    0 & \text{if } \mathbb{P}\left(\sum_{t=T+1}^{T+H}\hat{\sigma}^2_{t}(y_{1:T}) = \hat{\tau}_c\right) = 0, \\
    \frac{c - \mathbb{P}\left(\sum_{t=T+1}^{T+H}\hat{\sigma}^2_{t}(y_{1:T}) < \hat{\tau}_c\right)}{\mathbb{P}\left(\sum_{t=T+1}^{T+H}\hat{\sigma}^2_{t}(y_{1:T}) = \hat{\tau}_c\right)} & \text{otherwise}.
\end{cases}
\end{equation*}

\paragraph{Partial abstention}\label{sec:practical_partial}  \secondmethod (\secondacronym) approximates the optimal selection function derived in~\Thref{th:setting2}. To efficiently estimate $\gamma^*$ that satisfies the coverage constraint, we employ a binary search algorithm. This is possible because the coverage is monotonic with respect to $\gamma$ (see~\Lemref{lem:monotonicity_gamma} in~\Appref{app:proof_th2}). However, since the empirical joint distribution of conditional variances may not be continuous, there may not exist a $\gamma$ that satisfies the coverage constraint exactly. Therefore, we find $\hat{\gamma_{\ell}}$ and $\hat{\gamma_r}$ that provide the tightest lower and upper bounds on the target coverage on $\calD_{calib}$:
\begin{equation}
\begin{gathered}
    \hat{\gamma_{\ell}} = \operatorname{sup} \left\{\gamma \geq 0 : \bbE_{\calD_{calib}} \left[\hat{g_{\gamma}}\right] \leq cH\right\} \qquad
    \hat{\gamma_r} = \operatorname{inf} \left\{\gamma \geq 0 : \bbE_{\calD_{calib}} \left[\hat{g_{\gamma}}\right] \geq cH\right\}
\end{gathered}
\label{eq:gammas}
\end{equation}
Then, we enforce the coverage constraint exactly by defining $\hat{\gamma}$ as a random variable taking the value $\hat{\gamma}_{\ell}$ with probability $p$ and $\hat{\gamma}_{r}$ with probability $(1-p)$, where
\begin{equation}
    p = \frac{cH - \hat{\phi}(\hat{g}_{\hat{\gamma_r}})}{ \hat{\phi}(\hat{g}_{\hat{\gamma_{\ell}})} -  \hat{\phi}(\hat{g}_{\hat{\gamma_r}})}.
    \label{eq:p}
\end{equation}
Here $\hat{\phi}(\hat{g}_{\hat{\gamma_{\ell}}})$ and $\hat{\phi}(\hat{g}_{\hat{\gamma_r}})$ denote the empirical coverages on $\calD_{calib}$, when using $\hat{\gamma_{\ell}}$ and $\hat{\gamma_r}$ respectively. The resulting selection function $\hat{g}_{\texttt{PA}}$ is obtained as:
\[
    \hat{g}_{\texttt{PA}}\left(y_{1:T}\right) = \left(1, \underset{e \in \{0, \dots, H\}}{\arg \min} \left( \sum_{t=T+1}^{T+e}\hat{\sigma}^2_t\left(y_{1:T}\right) - \hat{\gamma}e \right)\right)
\]
The full algorithm can be found in Appendix~\ref{app:algorithm_pabfor}.

\paragraph{Interval abstention}\label{sec:practical_interval} Finally, \thirdmethod (\thirdacronym) approximates the optimal selection function derived in \Thref{th:setting3}. Similar to \secondmethod, this method relies on the variance estimates $\hat{\sigma}^2_t$, uses a binary search to identify the tightest bounds $\hat{\gamma}_{\ell}$ and $\hat{\gamma}_r$ around the target coverage, and obtains $\hat{\gamma}$ as a random variable induced by these bounds (see \Eqref{eq:gammas} and \Eqref{eq:p}). The key difference lies in the local optimization step required for each time-series. For a fixed $\hat{\gamma}$, instead of simply truncating the horizon, \thirdmethod estimates the optimal interval $(s^*, e^*)$ by solving the minimization problem in \Eqref{eq:optimalg_setting3}. 
We solve this optimization problem efficiently by decomposing it into two steps. First, for every possible forecast length $h \in \{0,\dots, H\}$, we identify the start step $\hat{s}_h$ of the subsequence of length $h$ with the lowest cumulative variance:
$$ 
\hat{s}_{h}(y_{1:T}) = \underset{s \in \{1, \dots, H-h+1\}}{\arg \min} \sum_{t=T+s}^{T+s+h-1} \hat{\sigma}_t^2(y_{1:T}),
$$
Second, among these candidates, we select the interval length $\hat{h}$ that minimizes the Lagrangian cost: 
$$
\hat{h}\left(y_{1:T}\right)=\!\!\underset{h \in \{0, \dots, H\}}{\arg\min}\!\!\left(\sum\limits_{t = T+\hat{s}_h(y_{1:T})}^{T+\hat{s}_h(y_{1:T}) + h-1}\hat{\sigma}^2_t\left(y_{1:T}\right) - \hat{\gamma} h\right).
$$
 
Finally, the selection function $\hat{g}_{\texttt{IA}}$ is obtained as:
\begin{equation*}
    \hat{g}_{\texttt{IA}}\left(y_{1:T}\right) = \left(\hat{s}_{\hat{h}}\left(y_{1:T}\right), \hat{s}_{\hat{h}}\left(y_{1:T}\right) + \hat{h}(y_{1:T}) - 1\right)
\end{equation*}
The full algorithm can be found in Appendix~\ref{app:algorithm_intabfor}.

%% file: texFiles/4_experiments.tex
\section{Experimental Evaluation} 
Empirically, we address the following research questions:
\begin{itemize}
   \item[\textbf{Q1}.] In full abstention, does \firstmethod achieve lower selective risk than its competitors?
   \item[\textbf{Q2}.] Does giving the selection function more fine-grained control over the chosen forecasting interval yield a lower selective risk?
    \item[\textbf{Q3}.] Is the coverage constraint satisfied at test time?
\end{itemize}

\subsection{Experimental Setup}\label{sec:experiments}
\textbf{Baselines.} 
We compare \firstmethod against three natural baselines. \underline{\AdaptiveCF} rejects the time series based on the sum of the widths of the conformal prediction intervals over the $H$-step horizon. We use the adaptive approach, as the standard conformal method yields equal-width intervals for all time-series~\citep{stankeviciute2021conformal}. \underline{\MQRNN} trains the selection function using a multi-quantile loss and rejects time series based on the sum of the predicted quantile intervals' width across the $H$-step horizon~\citep{wen2018multi}. \underline{\DPRNN} employs Monte Carlo dropout during training and inference. The selection function is trained to reject time series with high predictive uncertainty, measured as the sum of the standard deviations across the $H$-step horizon~\citep{gal2016dropout}. 

\textbf{Data.} We evaluate the effectiveness of our strategies on the $24$ datasets described in~\Tabref{tab:datasets} (\Appref{app:data}). We include datasets with diverse characteristics, \eg the number of time-series ranges from $380$ to $38400$, and the forecast horizon ranges between $6$ and $50$. All datasets are publicly available.~\Appref{app:data} provides additional details on the datasets and their preprocessing. 

\textbf{Evaluation metrics.} As standard in the bounded-abstention framework, we consider two key quantities: the \emph{empirical selective risk} and the \textit{empirical coverage}~\citep{geifman2017selective}. Specifically, we compute the selective risk as the average Mean Squared Error per accepted time step (see~\Eqref{eq:sr}). We assess whether the empirical coverage is within a user-defined tolerance $\epsilon$ of the target coverage~\citep{pugnana2024deep}:
\begin{equation}
ConSat(\epsilon) = \bbone_{\frac{\hat{\phi}(\hat{g})}{H} \ge c - \epsilon}\left(\epsilon\right)
\end{equation}
where $\hat{\phi}\left(\hat{g}\right)$ is the empirical coverage  on the test set.

\textbf{Setup.} For each dataset, we employ the following procedure. $(i)$ We randomly split the data into training, calibration, and test set $\left(60\%/20\%/20\%\right)$. $(ii)$ We train the forecaster $\hat{f}$, the conditional variance estimator $\hat{\sigma}^2\left(y_{1:T}\right)$ and the baselines on the training set. $(iii)$ We calibrate the rejection parameters on the calibration set for six target coverage levels $c \in \{0.7, 0.75, 0.8, 0.85, 0.9, 0.95\}$. Specifically, we determine $\hat{\tau}_c$ (for \firstmethod) or $\hat{\gamma}_{\ell}, \hat{\gamma}_{r}, p$ (for \secondmethod and \thirdmethod). For all baselines, we set the rejection threshold as the $c$-th quantile of their score distribution, ensuring that the $(1 - c)\%$ of time series with the largest uncertainty are rejected. $(iv)$ For each coverage level we determine the selection function $\hat{g}$ and $(v)$ compute the empirical selective risk and the empirical coverage on the test set. We repeat all steps using ten random seeds to split the data and initialize the network weights, and report the average results.

\textbf{Model Architectures and Hyperparameters.} The forecaster $f$ relies on a common backbone across all methods, consisting of a single-layer Vanilla \LSTM ~\citep{hochreiter1997long} with 20 hidden units. {\tt LSTMs}\xspace are a natural choice for time-series forecasting due to their ability to capture temporal correlations and long-range dynamics across time steps~\citep{taieb2016bias,stankeviciute2021conformal}. For our proposed strategies (\firstmethod, \secondmethod, and \thirdmethod), this backbone branches into two separate heads -- one for prediction and one for variance estimation -- where each head consists of an MLP with a single hidden layer of $40$ neurons and ReLU activation. These are trained end-to-end using the loss in Equation~\ref{eq:loss} with the variance weight $\beta$ set to $0.5$. Similarly, for the baselines, the \LSTM backbone feeds into a \MLP with a single hidden layer of $40$ neurons and ReLU activation. The baselines uses this architecture but employ different loss functions (\eg quantile loss) to estimate the uncertainty (\eg width of the prediction intervals). Moreover, for \AdaptiveCF, we set the sensitivity parameter $\beta$ to $1$; for \MQRNN, we set the quantile loss parameter $q$ to $0.05$; and for \DPRNN, we apply a dropout rate of $0.1$ and compute the standard deviation over $100$ Monte Carlo samples. All networks are trained for $500$ epochs using the Adam optimizer with a learning rate of $0.001$. We used Python language and an AMD EPYC 9334 32-Core Processor machine with 256 GB RAM and two NVIDIA L40S GPUs to run all our experiments. Our evaluation pipeline requires approximately three days to complete.

\subsection{Experimental Results}\label{sec:results}

\textbf{(Q1) \firstmethod significantly outperforms the competitors.}
\begin{figure*}[!t]
  \centering
  \includegraphics[width=1\textwidth]{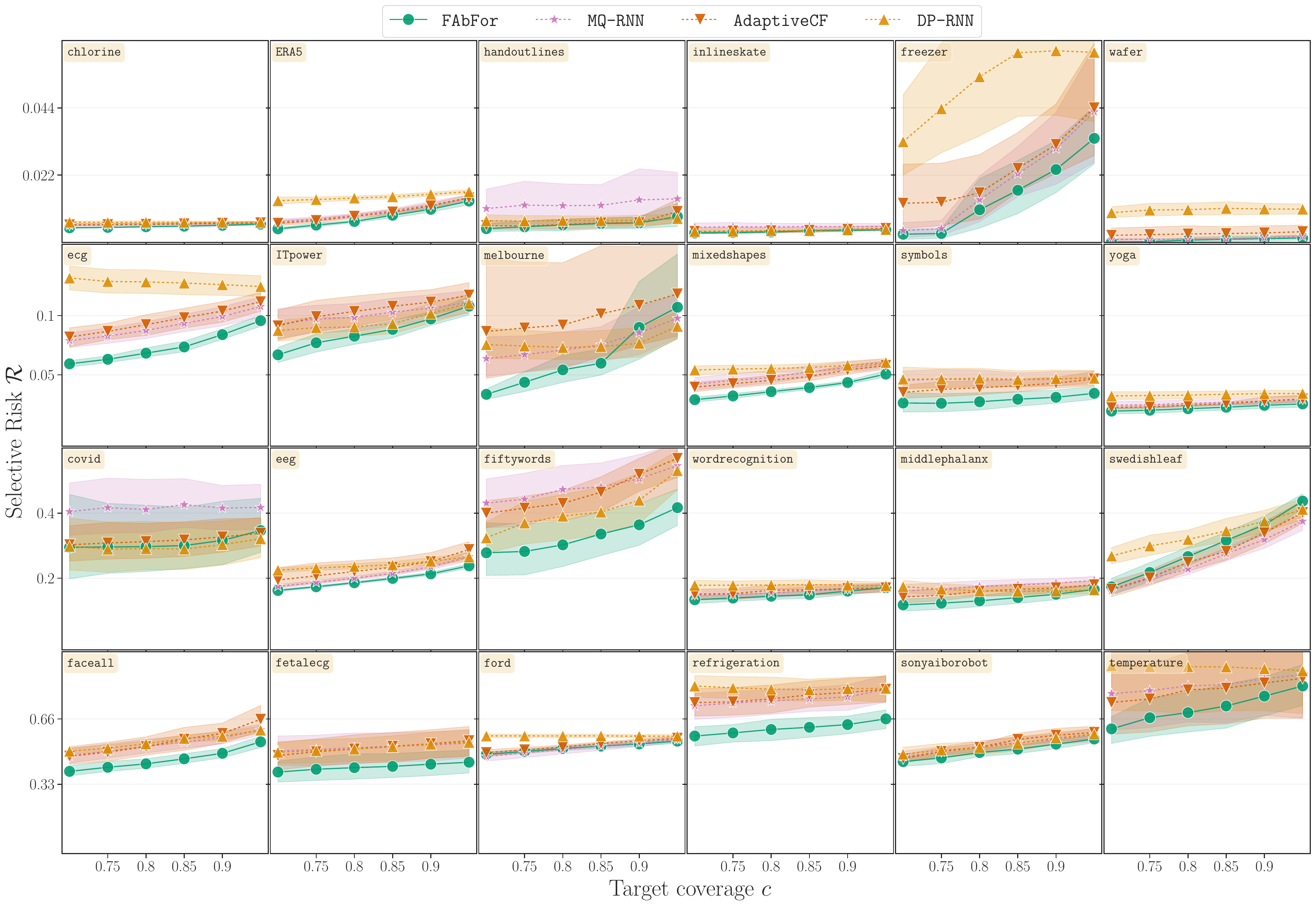}
  \caption{Each dataset's average selective risk $\hat{\calR}$ for all methods across six target coverages in the full abstention setting. For all the coverage level considered, \firstmethod consistently achieves a lower selective risk than the baselines in most of the datasets.}
  \label{fig:Q1}
\end{figure*}
Figure~\ref{fig:Q1} shows each dataset's average empirical selective risk $\hat{\calR}$ across six target coverage levels for all considered methods in the full abstention setting. As expected, the selective risk decreases for all methods as more time-series are rejected. Overall, \firstmethod consistently outperforms all baselines on $22$ out of $24$ datasets, reducing the selective risk by an average of $14\%$ vs \AdaptiveCF and \MQRNN, and $19\%$ vs \DPRNN. Moreover, \firstmethod achieves lower selective risk in around $80\%$ of the experiments against the two runner-ups, \AdaptiveCF and \MQRNN.

\begin{figure*}[!t]
  \centering
  \includegraphics[width=1\textwidth]{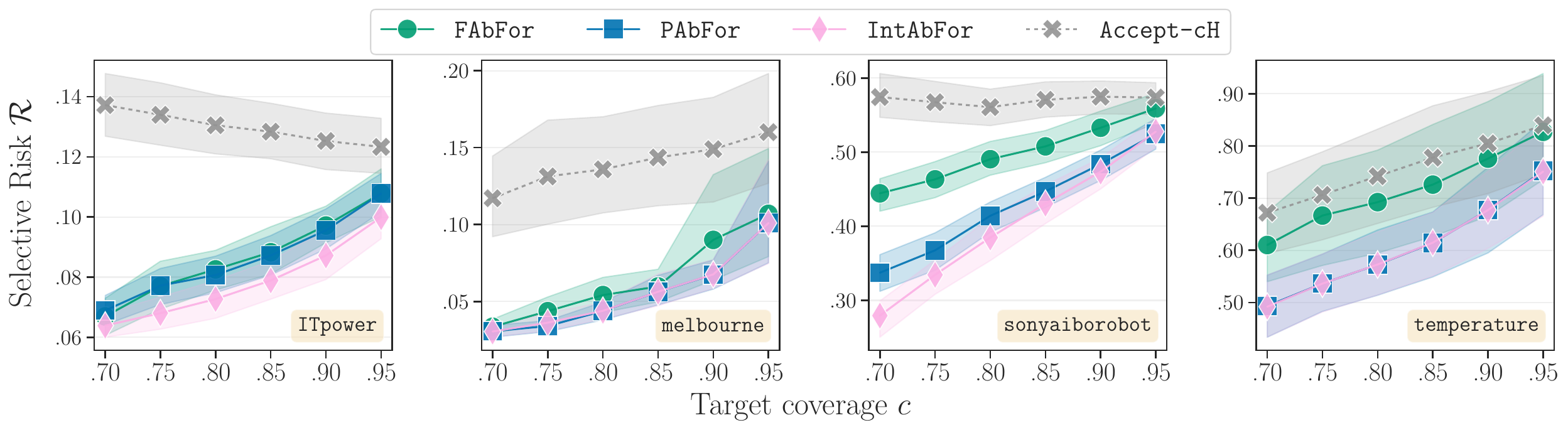}
  \caption{
  Average selective risk per step for \firstmethod, \secondmethod, \thirdmethod, and \dummy across six target coverages on four representative datasets. \thirdmethod achieves often the lowest risk by leveraging the highest flexibility, followed by, in order, \secondmethod and \firstmethod. \dummy, which forecasts all time-series until step $cH$, performs worse.
  }
  \label{fig:Q2}
\end{figure*}

\textbf{(Q2) Finer-grained control in the abstention strategy yields lower selective risk.}
To assess if having more fine-grained control over the selected interval yields a lower selective risk, we compare our three proposed methods to a naive baseline \underline{\dummy} that predicts the first $cH$ steps in the horizon.\footnote{To exactly enforce the coverage constraint, it always accepts the first $\lfloor c H\rfloor$ steps and accepts the $\left(\lfloor c H\rfloor +1\right)$-th step with probability $p = cH - \lfloor c H\rfloor$.} We omit the other \textbf{Q1} baselines because they perform worse than \firstmethod and their selection functions only permit full abstention. 

\Figref{fig:Q2} reports the average selective risk for the three proposed strategies and \dummy across six target coverage levels on four representative datasets (see Appendix~\ref{sec:Q2app} for full results). All our approaches significantly outperform \dummy, confirming that an effective abstention strategy requires assessing the cumulative conditional risk.

Next, we rank the methods from best (rank 1) to worst (rank 4) for each experiment and report the average ranks in~\cref{tab:Q2}. Across all coverages, we observe a performance hierarchy consistent with the level of control of the selection function: \thirdmethod consistently achieves the lowest (best) rank, followed by \secondmethod and \firstmethod. Quantitatively, averaging across all datasets, \secondmethod reduces the selective risk by $7\%$ compared to \firstmethod, while \thirdmethod achieves a further $2\%$ reduction compared to \secondmethod. The similar performance of \secondmethod and \thirdmethod arises because uncertainty typically increases over the time horizon. Hence, \thirdmethod's interval typically starts at $s\!=\!1$, resulting in the same forecast interval as \secondmethod. However, in some datasets the added flexiblity is very important. For example, on \texttt{ITpower} and \texttt{sonyaiborobot}, \thirdmethod achieves notably lower risk than \secondmethod: on this dataset, \thirdmethod's selection function sets $s > 1$ for nearly $40\%$ of the test time-series.

\begin{table}[t]
\centering
\caption{Average rank ($\pm$ std.) for each method across all datasets for six target coverage levels. 
}
\begin{tabular}{c|ccc|c}
\toprule
{} & \multicolumn{4}{c}{\textbf{Ranks} (avg. $\pm$ std.)}\\
$c$ & \firstmethod & \secondmethod & \thirdmethod & \dummy \\
 \midrule
.70 & 2.77 $\pm$ 0.84 & 1.72 $\pm$ 0.67 & \textbf{1.54 $\pm$ 0.70} & 3.85 $\pm$ 0.40 \\
.75 & 2.78 $\pm$ 0.84 & 1.68 $\pm$ 0.65 & \textbf{1.59 $\pm$ 0.71} & 3.85 $\pm$ 0.41 \\
.80 & 2.78 $\pm$ 0.81 & 1.65 $\pm$ 0.67 & \textbf{1.59 $\pm$ 0.72} & 3.85 $\pm$ 0.40 \\
.85 & 2.79 $\pm$ 0.76 & 1.68 $\pm$ 0.67 & \textbf{1.57 $\pm$ 0.73} & 3.85 $\pm$ 0.44 \\
.90 & 2.69 $\pm$ 0.85 & 1.72 $\pm$ 0.67 & \textbf{1.60 $\pm$ 0.75} & 3.85 $\pm$ 0.45 \\
.95 & 2.58 $\pm$ 0.86 & 1.73 $\pm$ 0.73 & \textbf{1.64 $\pm$ 0.80} & 3.86 $\pm$ 0.45 \\
\bottomrule
\end{tabular}
\label{tab:Q2}
\end{table}
\textbf{(Q3) Finer-grained control in the abstention strategy enables precise coverage satisfaction.}
\begin{figure}[!t]
  \centering
  \includegraphics[width=.45\textwidth]{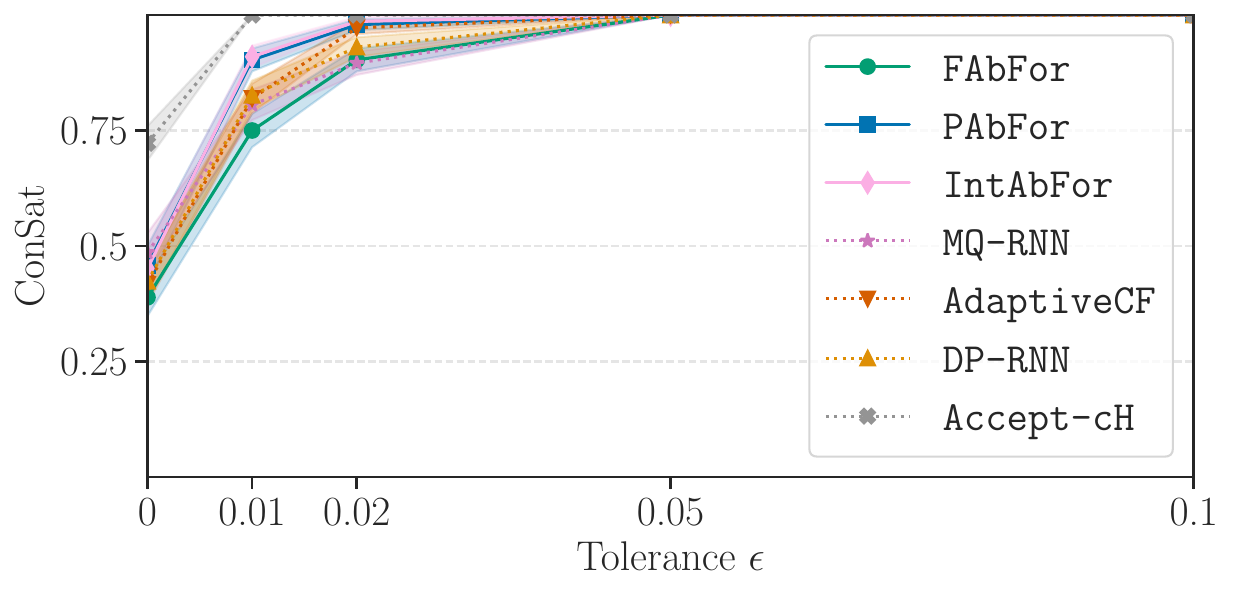}
  \caption{
 Constraint satisfaction ($ConSat$) as a function of the tolerance $\epsilon$ for our three strategies and the baselines. 
 }
  \label{fig:Q3}
\end{figure}
\Figref{fig:Q3} reports the average coverage satisfaction $ConSat$ across five tolerances $\epsilon$ for all considered  methods. In the full abstention setting, all strategies exhibit comparable constraint satisfaction rates, consistently meeting the constraint when $\epsilon \ge 0.05$. Notably, \secondmethod and \thirdmethod outperform all other strategies (with the exception of \dummy, which is specifically designed to exactly meet the constraint). This suggests that the finer-grained control of these abstention strategies, \ie the ability to accept specific intervals rather than making an accept/reject decision on the entire horizon, enables more precise control over the coverage constraint.

%% file: texFiles/5_relatedWork.tex
\section{Related Work}

\textbf{Selective Prediction.} 
\textit{Bounded-abstention} (or \textit{Selective Prediction)}~\citep{DBLP:journals/jmlr/El-YanivW10,geifman2019sectivenet} methods aim to solve the trade-off between selective risk and coverage by learning a predictor together with a selection function that abstains when the uncertainty is high~\citep{hendrickx2024machine,DBLP:conf/aaai/RuggieriP25}. 
However, selective predictors have been predominantly studied in the context of classification \citep{geifman2017selective,DBLP:conf/aistats/PugnanaR23,franc2023optimal} or regression~\citep{zaoui2020regression},
while limited attention has been devoted to other ML tasks~\citep{DBLP:conf/nips/ChengHWW12,perini2023unsupervised,stradiotti2025learning}. 

Only a few works have studied abstention mechanisms for time-series inputs:
\citet{DBLP:conf/ciel/HatamiC13} consider the problem of early-time series classification, while
\citet{DBLP:conf/nips/PidanE11} study selective prediction for single-step trend classification, modeling the temporal trend as a Hidden Markov Model. In contrast, we formulate bounded abstention for multi-horizon forecasting without assuming a specific generative process and theoretically characterize optimal policies for three abstention settings that select which parts of the forecast horizon to output.

\textbf{Uncertainty quantification in time-series forecasting.}
Several paradigms have been proposed for uncertainty quantification in time-series forecasting. \textsc{Bayesian RNNs}~\citep{chien2016bayesian,fortunato2019bayesian,mirikitani2010recursive} capture epistemic uncertainty by placing distributions over model parameters, but they require substantial architectural modifications and depend heavily on the choice of the prior. Simplifications such as Monte Carlo dropout~\citep{gal2016dropout,gal2016theorethically} reduce complexity, but yet often yield poorly calibrated intervals~\citep{alaa2020frequentist}. \textsc{Quantile RNNs}~\citep{wen2018multi} learn prediction intervals directly via the pinball loss, treating the upper and lower bounds as separate prediction targets, but they are prone to quantile crossing and overfitting~\citep{gasthaus2019probabilistic}. More recently, \textsc{Conformal Forecasting RNNs}~\citep{stankeviciute2021conformal} have emerged as a flexible, post-hoc approach for constructing prediction intervals with finite-sample guarantees~\citep{xu2023sequential, auer2023conformal}.

%% file: texFiles/6_conclusion.tex
\section{Conclusions}
We studied learning with abstention in the context of multi-horizon time-series forecasting, which entails learning a model that can simultaneously make predictions for a consecutive sequence of subsequent time points. This required generalizing the notion of abstention from classic regression and classification problems to handle the structured and correlated predictions offered by a multi-horizon forecaster. We formalized the general problem of learning with abstention in multi-horizon forecasting. We proposed three natural ways to instantiate the problem: full abstention, partial abstention, and interval abstention. Each gives progressively more flexibility to the model to select the interval for which it offers a forecast. Using the bounded abstention framework, we derived the theoretical optimal selection function for each setting and provided algorithms learning the corresponding models in practice. Empirically, we found that our proposed approaches significantly outperformed the baselines. Moreover, abstention strategies that have more flexibility over which interval within the horizon the model offers predictions for yielded both lower selective risks and came closer to satisfy the desired constraint on test data.  

\textbf{Limitations.} Our framework assumes that we have access to multiple time series and we can treat them as exchangeable. 
Moreover, our approach trains the estimator of the conditional risk and the predictor jointly. Whether this is possible in real-life scenarios is context-dependent.
Finally, our approach focuses on ambiguity rejection: how to extend the setting also to novelty rejection is left for future works.

%% file: texFiles/appendix.tex
\newpage

\section{Proofs}
\subsection{Proof of \Thref{th:setting1}}\label{app:proof_th1}
\begin{proof}
The selection function in full abstention returns a pair of indices where the start step is fixed at $1$, while the end step can be $0$ (\ie reject the entire horizon) or $H$ (\ie accept the entire horizon):
$$
g_{\texttt{FA}}\left(y_{1:T}\right) = \left(1,e\left(y_{1:T}\right)\right)
$$
Thus, we rewrite the selective risk in \Eqref{eq:sr} for the full abstention setting as:
\begin{equation*}
    \calR(f, g_{\texttt{FA}})=\frac{\bbE_{Y_{1:T+H}}\left[\sum_{t=T+1}^{T+e\left(y_{1:T}\right)} \ell\left(y_t, f_t\left(y_{1: T}\right)\right)\right]}{\mathbb{E}_{Y_{1: T}}\left[e\left(y_{1:T}\right)\right]} = \frac{\bbE_{Y_{1:T+H}}\left[\frac{e\left(y_{1:T}\right)}{H}\sum_{t=T+1}^{T+H} \ell\left(y_t, f_t\left(y_{1: T}\right)\right)\right]}{\mathbb{E}_{Y_{1: T}}\left[e\left(y_{1:T}\right)\right]}
\end{equation*}

Thus, the optimization problem in \Eqref{eq:problem} can be solved by finding the optimal end step $e^*$ as:
    \begin{equation}
    \begin{gathered}
    e^* =  \underset{e}{ \arg\min } \; \frac{\bbE_{Y_{1:T}}\left[\frac{e}{H} r_{1:H}\left(y_{1:T}\right)\right]}{\phi\left(e\right)} \quad 
    \text { s.t. } \quad \phi\left(e\right) \geq cH
    \end{gathered}
    \label{eq:problemproof}
    \end{equation}
Let the total conditional risk over the entire horizon $r_{1:H}\left(y_{1:T}\right)$ be defined as
    \begin{equation*}
        r_{1:H}\left(y_{1:T}\right) = \sum_{t=T+1}^{T+H} \rho_t\left(y_{1:T}\right) = \bbE_{Y_{T+1:T+H} \mid Y_{1:T}}\left[\sum_{t=T+1}^{T+H}\ell\left(y_t, f_t\left(y_{1:T}\right)\right) \mid Y_{1:T} = y_{1:T}\right].
    \end{equation*}

    We will prove the theorem by showing that any other selection function $g_{\texttt{FA}}$ feasible to \Eqref{eq:problemproof} that selects a different $e$ is not an optimal solution. We examine three cases.

    \paragraph{Case 1.} We consider functions where the end step is not always $H$ when $r_{1:H}\left(y_{1:T}\right) < \tau_c^*$. Thus, we define $\bar{g_{\texttt{FA}}}\left(y_{1:T}\right) = (1,\bar{e}\left(y_{1:T}\right))$ such that $\bar{e}\left(y_{1:T}\right)$:
       \begin{equation}
       \begin{aligned}
        \bbE_{Y_{1:T}}\left[\bar{e}\left(y_{1:T}\right)\bbone\left\{r_{1:H}\left(y_{1:T}\right) < \tau_c^*\right\}\right] < \bbE_{Y_{1:T}} \left[e^{*}\left(y_{1:T}\right)\bbone\left\{r_{1:H}\left(y_{1:T}\right) < \tau_c^*\right\}\right]
       \end{aligned}
        \label{eq:cond1violated}
    \end{equation}
        Thus, there is a subset $Y^{\prime}_{1:T} \subseteq \calY_{1:T}$ such that
    \begin{equation*}
        \forall y_{1:T} \in Y^{\prime}_{1:T} : r_{1:H}\left(y_{1:T}\right) \geq \tau_c^*
    \end{equation*}
    and 
    \begin{equation}
        \begin{aligned}
\!\!\bbE_{Y^{\prime}_{1:T}}\left[\bar{e}\left(y_{1:T}\right)\right] = \bbE_{Y_{1:T}}\!\!\left[H\bbone\left\{r_{1:H}\left(y_{1:T}\right) < \tau_c^*\right\}\right] - \bbE_{Y_{1:T}}\!\!\left[\bar{e}\left(y_{1:T}\right)\bbone\left\{r_{1:H}\left(y_{1:T}\right) < \tau_c^*\right\}\right] > 0
        \label{eq:gexpectation}
        \end{aligned}
    \end{equation}
    where the inequality holds for \eqref{eq:cond1violated}. Then, we denote the objective function in \Eqref{eq:problemproof} as 
    \begin{equation*}
        F(e) = \frac{\bbE_{Y_{1:T}}\left[\frac{e\left(y_{1:T}\right)}{H}r_{1:H}\left(y_{1:T}\right)\right]}{\bbE_{Y_{1:T}}\left[e\left(y_{1:T}\right)\right]} = \frac{\bbE_{Y_{1:T}}\left[\frac{e\left(y_{1:T}\right)}{H}r_{1:H}\left(y_{1:T}\right)\right]}{\phi\left(e\right)}
    \end{equation*}
    and define $g_{\texttt{FA}}^{\prime}\left(y_{1:T}\right)$ that selects $\left(1, e^{\prime}\left(y_{1:T}\right)\right)$ as
    \begin{equation*}
        g_{\texttt{FA}}^{\prime}(y_{1:T})=\left\{\begin{array}{ccc}
(1,H) & \text { if } & r_{1:H}\left(y_{1:T}\right)<\tau_c^*, \\
(1,0) & \text { if } & y_{1:T} \in Y^{\prime}_{1:T}, \\
g_{\texttt{FA}}\left(y_{1:T}\right) &  & \text { otherwise }
\end{array}\right.
    \end{equation*}
    where $g_{\texttt{FA}}^{\prime}$ is a solution to \eqref{eq:problemproof} since $\phi(\bar{e}) = \phi\left(e^{\prime}\right)$. Now, if we compute 
    \begin{equation*}
    \begin{aligned}
        \phi\left(\bar{e}\right)\left(F\left(\bar{e}\right)  - F\left(e^{\prime}\right)\right) &= \bbE_{Y^{\prime}_{1:T}}\left[r_{1:H}\left(y_{1:T}\right) \frac{\bar{e}\left(y_{1:T}\right)}{H}\right] - \bbE_{Y_{1:T}}\left[r_{1:H}(y_{1:T})\bbone\left\{r_{1:H}\left(y_{1:T}\right) < \tau^*_c\right\}\right] \\
        & \qquad + \bbE_{Y_{1:T}}\left[r_{1:H}\left(y_{1:T}\right)\frac{\bar{e}\left(y_{1:T}\right)}{H}\bbone\left\{r_{1:H}\left(y_{1:T}\right) < \tau^*_c\right\}\right]\\
        & \geq \bbE_{Y_{1:T}} \left[\tau^*_c \left(1-\frac{\bar{e}\left(y_{1:T}\right)}{H}\right)\bbone\left\{r_{1:H}\left(y_{1:T}\right) < \tau_c^*\right\}\right] \\
        & \qquad - \bbE_{Y_{1:T}} \left[r_{1:H}\left(y_{1:T}\right)\left(1-\frac{\bar{e}\left(y_{1:T}\right)}{H}\right)\bbone\left\{r_{1:H}\left(y_{1:T}\right) < \tau_c^*\right\}\right] \\
        & \geq \bbE_{Y_{1:T}} \left[\left(\tau^*_c - r_{1:H}\left(y_{1:T}\right)\right)\left(1 - \frac{\bar{e}\left(y_{1:T}\right)}{H}\right)\bbone\left\{r_{1:H}\left(y_{1:T}\right) < \tau^*_c\right\}\right] > 0
    \end{aligned}
    \end{equation*}
Where the last inequality comes from linearity of the expectation given \eqref{eq:gexpectation} and $\left(\tau^*_c - r_{1:H}\left(y_{1:T}\right)\right) > 0$ over the set $\bbone\left\{r_{1:H}\left(y_{1:T}\right) < \tau_c^*\right\}$. Since, this value is greater than zero we have that $\bar{g_{\texttt{FA}}}$ is not an optimal solution.
    \paragraph{Case 2.} We want to show that for any other function $\bar{g_{\texttt{FA}}} = \left(1, \bar{e}\right)$ as
    \begin{equation*}
        \bar{g_{\texttt{FA}}}(y_{1:T})=\left\{\begin{array}{ccc}
 g_{\texttt{FA}}^{*}(y_{1:T}) & \text { if } & r_{1:H}\left(y_{1:T}\right) \neq \tau^*_c, \\
(1, \bar{\xi}) & \text { if } & r_{1:H}\left(y_{1:T}\right) = \tau^*_c \\
\end{array}\right.
    \end{equation*}
    where $\bar{\xi}$ is a random variable such that $\bar{\xi} = H$ with probability $\bar{\kappa}$ and $\bar{\xi} = 0$ with probability $1-\bar{\kappa}$ with $\kappa \ne \bar{\kappa}$ , then $F\left(\bar{e}\right) > F\left(e^*\right)$.
    
    We distinguish two sub-cases:
    \begin{enumerate}[label=\alph*.]
        \item $\bar{\kappa} > \kappa$. Then, 
            \begin{equation*}
        \begin{aligned}
         \bbE_{Y_{1:T}} \left[\bar{e}\left(y_{1:T}\right)\right]
           &=  \bbE_{Y_{1:T}} \left[\bar{e}\left(y_{1:T}\right)\bbone\left\{r_{1:H}\left(y_{1:T}\right) < \tau^*_c\right\}\right] +  \bbE_{Y_{1:T}} \left[\bar{e}\left(y_{1:T}\right)\bbone\left\{r_{1:H}\left(y_{1:T}\right) = \tau^*_c\right\}\right]  \\
           &> \bbE_{Y_{1:T}} \left[e^*\left(y_{1:T}\right)\bbone\left\{r_{1:H}\left(y_{1:T}\right) < \tau^*_c\right\}\right] + \bbE_{Y_{1:T}} \left[\xi^*\bbone\left\{r_{1:H}\left(y_{1:T}\right) = \tau^*_c\right\}\right] = cH
        \label{eq:ggeqcoverage}
        \end{aligned}
    \end{equation*}
        Thus, have that $\phi\left(\bar{e}\right) > cH$. Thus, we define another function $g_{\texttt{FA}}^{\prime}=(1, e^{\prime})$ such that $e^{\prime} = \frac{cH}{\phi\left(\bar{e}\right)}\bar{e}$ and $\phi\left(e^{\prime}\right)=cH$. $g_{\texttt{FA}}^{\prime}$ is a feasible solution for \Eqref{eq:problemproof}, but
    \begin{equation*}
        \begin{aligned}
        \bbE_{Y_{1:T}}\left[e^{\prime}\left(y_{1:T}\right)\bbone\left\{r_{1:H}\left(y_{1:T}\right) < \tau^*_c\right\}\right]   &= \frac{cH}{\phi\left(\bar{e}\right)}\bbE_{Y_{1:T}} \left[\bar{e}\left(y_{1:T}\right)\bbone\left\{r_{1:H}\left(y_{1:T}\right)< \tau^*_c\right\}\right]\\
        &< \bbE_{Y_{1:T}} \left[e^*\left(y_{1:T}\right)\bbone\left\{r_{1:H}\left(y_{1:T}\right) < \tau^*_c\right\}\right]
        \end{aligned}
    \end{equation*}
    As seen in Case 1, $g_{\texttt{FA}}^{\prime}$ is not an optimal solution. Since $F\left(\bar{e}\right) = F\left(e^{\prime}\right)$, this implies that $\bar{g_{\texttt{FA}}}$ is not an optimal solution too.
    \item $\bar{\kappa} < \kappa$. In this case, there is a $g_{\texttt{FA}}^{\prime} = \left(1, e^{\prime}\right)$ such that:
        \begin{equation*}
        g_{\texttt{FA}}^{\prime}(y_{1:T})=\left\{\begin{array}{ccc}
 \bar{g_{\texttt{FA}}}(y_{1:T}) & \text { if } & r_{1:H}\left(y_{1:T}\right) < \tau^*_c, \\
(1,0) & \text { if } & r_{1:H}\left(y_{1:T}\right) > \tau^*_c, \\
\end{array}\right.
    \end{equation*}
    and 
        \begin{equation*}
        \begin{aligned}
        \bbE_{Y_{1:T}} \left[e^{\prime}\left(y_{1:T}\right)\bbone\left\{r_{1:H}\left(y_{1:T}\right) = \tau^*_c\right\}\right] &= \bbE_{Y_{1:T}}\left[\bar{e}\left(y_{1:T}\right)\bbone\left\{r_{1:H}\left(y_{1:T}\right) = \tau_c^*\right\}\right] \\
        & \qquad + \bbE_{Y_{1:T}}\left[\bar{e}\left(y_{1:T}\right)\bbone\left\{r_{1:H}\left(y_{1:T}\right) > \tau^*_c\right\}\right] 
        \end{aligned}
    \end{equation*}
    We have that  $\phi\left(\bar{e}\right) = \phi\left(e^{\prime}\right)$. Now, we derive:
    \begin{equation*}
        \begin{aligned}
            \phi\left(\bar{e}\right)\left(F\left(\bar{e}\right) - F\left(e^{\prime}\right)\right) &= \bbE_{Y_{1:T}} \left[\frac{\bar{e}\left(y_{1:T}\right)}{H}r_{1:H}\left(y_{1:T}\right) - \frac{e^{\prime}\left(y_{1:T}\right)}{H}r_{1:H}\left(y_{1:T}\right)\right] \\
            &=  \bbE_{Y_{1:T}} \left[\frac{\bar{e}\left(y_{1:T}\right)}{H}\left(y_{1:T}\right)r_{1:H}\left(y_{1:T}\right)\bbone\left\{r_{1:H}\left(y_{1:T}\right) > \tau^*_c\right\}\right] \\
            & \qquad - \tau^*_c \bbE_{Y_{1:T}} \left[\frac{\bar{e}\left(y_{1:T}\right)}{H}\bbone\left\{r_{1:H}\left(y_{1:T}\right) > \tau^*_c\right\}\right] \\ 
            &=  \bbE_{Y_{1:T}} \left[\frac{\bar{e}\left(y_{1:T}\right)}{H}\underbrace{\left(r_{1:H}\left(y_{1:T}\right)-\tau^*_c\right)}_{>0}\bbone\left\{r_{1:H}\left(y_{1:T}\right) > \tau^*_c\right\}\right]
            > 0\\
        \end{aligned}
    \end{equation*}
    Where, again, the last inequality comes from linearity of the expectation given $\bar{e}\left(y_{1:T}\right) > 0$ and $r_{1:H}\left(y_{1:T}\right) > \tau_c^*$ over the set $\bbone\left\{r_{1:H}\left(y_{1:T}\right) > \tau^*_c\right\}$. Thus, $\bar{g_{\texttt{FA}}}$ is not an optimal solution.
    \end{enumerate}

    \paragraph{Case 3.} In the final case, we consider functions $\bar{g_{\texttt{FA}}} = (1, \bar{e})$ that do not always return $(1,0)$ when $r_{1:H}\left(y_{1:T}\right)> \tau_c^*$. In this case, we have that $\phi\left(\bar{e}\right) > cH$. As shown in Case 2a, this can not be an optimal solution.
\end{proof}

\subsection{Proof of \Thref{th:setting2}}\label{app:proof_th2}
We start by introducing two lemmas that will be used later in the proof.
\begin{lemma}
    \label{lem:fractional_to_linear}
Assume $D(e) > 0$ for any $e \in \mathcal{E}^{\prime}$ and $\calR(e) = \frac{N(e)}{D(e)}$. Then minimizing $\calR(e)$ is equivalent to solving the following parametric linearized problem:
\begin{equation*}
e^* \in \underset{e \in \mathcal{E}'}{\arg \min} \frac{N(e)}{D(e)} \; \Longleftrightarrow \; e^* \in \underset{e \in \calE^{\prime}}{\arg \min} \big( N(e) - \lambda^* D(e) \big)
\end{equation*}
where $\lambda^* = \calR(e^*) = \frac{N(e^*)}{D(e^*)} \ge 0$.
\end{lemma}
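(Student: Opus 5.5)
This is the classical Dinkelbach equivalence, and I would prove it by comparing both problems against the single inequality $N(e) \ge \lambda^* D(e)$ on $\calE'$. Throughout, $e$ ranges over the feasible set $\calE'$ of selection rules and $D(e) > 0$ there. I would also treat $\lambda^*$ as a fixed number, equal to the optimal value $\min_{e\in\calE'} N(e)/D(e)$, assumed attained. This reading is needed to make the statement unambiguous: $\lambda^*$ is the common value $\calR(e^*)$ of any minimizer of the ratio, not a quantity that moves with the candidate $e$. Nonnegativity $\lambda^* \ge 0$ follows because $N$ is an expected loss, which is nonnegative, and $D>0$.

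For the forward implication ($\Rightarrow$), suppose $e^*$ minimizes $N/D$, so $\lambda^* = N(e^*)/D(e^*)$. For every $e \in \calE'$ we have $N(e)/D(e) \ge \lambda^*$. Multiplying by $D(e) > 0$ gives $N(e) - \lambda^* D(e) \ge 0$. At $e^*$ this quantity equals $N(e^*) - \lambda^* D(e^*) = 0$. Hence $e^*$ attains the minimum value $0$ of the linearized objective, so $e^* \in \arg\min_{e}(N(e) - \lambda^* D(e))$.

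For the converse ($\Leftarrow$), let $\tilde e$ minimize $N(e) - \lambda^* D(e)$ over $\calE'$. By the forward step this minimum value is $0$, since a ratio minimizer exists and gives value $0$ while every $e$ gives value $\ge 0$. So $N(\tilde e) = \lambda^* D(\tilde e)$. Dividing by $D(\tilde e) > 0$ gives $N(\tilde e)/D(\tilde e) = \lambda^*$, the optimal ratio, so $\tilde e$ is a ratio minimizer.

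The only delicate point is this second direction. It uses that the minimum value of the parametric problem at the correct $\lambda^*$ is exactly zero, which in turn requires existence of a minimizer of the ratio, or at least that $\lambda^*$ is the infimum. If attainment were not assumed, I would instead argue with $\lambda^* = \inf N/D$. Then $N - \lambda^* D \ge 0$ everywhere, and any $e$ achieving value $0$ is optimal for the ratio. I would note this in passing, since in the paper the argmin sets are implicitly assumed nonempty.
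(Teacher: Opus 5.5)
Your proof is correct, and your forward direction is exactly the paper's argument: multiply $N(e)/D(e)\ge\lambda^*$ by $D(e)>0$ and note that the linearized objective equals $0$ at $e^*$.

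The converse is where you differ. The paper reads $\lambda^* = \mathcal{R}(e^*)$ as the ratio of the candidate $e^*$ itself, so its hypothesis is that $e^*$ minimizes $N(e)-\mathcal{R}(e^*)D(e)$. That objective vanishes at $e^*$, so $N(e)-\lambda^*D(e)\ge 0$ on $\mathcal{E}'$. Dividing by $D(e)$ gives ratio optimality at once, without knowing beforehand that a ratio minimizer exists. You instead fix $\lambda^*$ as the optimal ratio value, which is why you need attainment to conclude that the linearized minimum is $0$.

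So your claim that this reading is ``needed'' to make the statement unambiguous is not accurate. The self-referential (Dinkelbach fixed-point) reading is equally well defined and needs no attainment hypothesis.

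What your reading buys is the set identity $\arg\min_{\mathcal{E}'} N/D = \arg\min_{\mathcal{E}'}(N-\lambda^*D)$ for one fixed constant $\lambda^*$. This is the form the proofs of Theorems~\ref{th:setting2} and~\ref{th:setting3} actually use: they minimize a Lagrangian with the optimal risk plugged in as a constant, then declare the minimizer ratio-optimal. Under the paper's reading, that step would also require checking $\mathcal{R}(e^*)=\lambda^*$ for the Lagrangian minimizer. That check is exactly your ``minimum value is $0$'' argument.
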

\begin{proof}
    We prove each direction separately.
    \begin{itemize}
    \item[$\Rightarrow$]
    Let $e^* \in \underset{e \in \calE^{\prime}}{\arg\min}~\calR(e)$, \ie
    \[
        \frac{N(e^*)}{D(e^*)} \le \frac{N(e)}{D(e)}, \qquad \forall e \in \calE^{\prime}.
    \]
    Since $D(e) > 0$, we can rewrite this inequality as 
    \[
        N(e) - \frac{N(e^*)}{D(e^*)} D(e) \ge 0
    \]
    Let $\lambda^* = \frac{N(e^*)}{D(e^*)}$ be the risk of the optimal solution, then the inequality simplifies to:
    \[
    N(e) - \lambda^* D(e) \ge 0  
    \]
    Evaluating this parametric form at the optimal solution $e^*$, we obtain 
    \[
     N(e) - \lambda^* D(e) \geq  0 = N(e^*) - \frac{N(e^*)}{D(e^*)}D(e^*)  =  N(e^*) - \lambda^* D(e^*).
    \]
    Hence, there exists $\lambda^*$ such that $e^*$ also minimizes the parametric form $N(e) - \lambda^* D(e)$.
    \item[$\Leftarrow$]
    Let $\lambda^* = \frac{N(e^*)}{D(e^*)}$, we have that
    \[
        N(e^*) - \lambda^* D(e^*) \le N(e) - \lambda^* D(e),
        \qquad \forall e \in \calE^{\prime}.
    \]
    Dividing both sides by $D(e) > 0$, we obtain
    \[
        \lambda^* = \frac{N(e^*)}{D(e^*)} \le \frac{N(e)}{D(e)} \qquad \forall e \in \calE^{\prime},
    \]
    so $e^*$ also minimizes the fractional objective.
    \end{itemize}
\end{proof}

\begin{lemma}
    Given $\gamma > 0$, define for a fixed $y_{1:T}$
    \[
        e(\gamma) := \underset{e \in \{0,\dots,H\}}{\arg\min} \bigl\{ r_{1:e}(y_{1:T}) - \gamma e \bigr\},
    \]
    \[
        D(\gamma) := \mathbb{E}_{Y_{1:T}}\bigl[e(\gamma)\bigr].
    \]
    where we select the smallest minimizer in case of ties. Then for any fixed $y_{1:T}$, $e(\gamma)$ is non-decreasing in $\gamma$. Consequently, $D(\gamma)$ is also non-decreasing in $\gamma$.
    \label{lem:monotonicity_gamma}
\end{lemma}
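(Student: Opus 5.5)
We use the standard exchange argument for parametric minimizers, with the smallest-minimizer tie-breaking convention. Fix $y_{1:T}$ and write $C(e) := r_{1:e}(y_{1:T}) = \sum_{t=T+1}^{T+e}\rho_t(y_{1:T})$, so that $e(\gamma) = \min \arg\min_{e} \{C(e) - \gamma e\}$. Take $0 < \gamma_1 < \gamma_2$ and set $e_1 := e(\gamma_1)$, $e_2 := e(\gamma_2)$. The goal is to show $e_2 \ge e_1$, which we do by contradiction, assuming $e_2 < e_1$.

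Optimality of each point at its own parameter gives two inequalities. Since $e_1$ minimizes at $\gamma_1$,
\[
C(e_1) - \gamma_1 e_1 \le C(e_2) - \gamma_1 e_2 .
\]
Since $e_2$ minimizes at $\gamma_2$,
\[
C(e_2) - \gamma_2 e_2 \le C(e_1) - \gamma_2 e_1 .
\]
Adding the two cancels the $C$ terms and leaves $(\gamma_2 - \gamma_1)(e_1 - e_2) \le 0$. With $\gamma_2 > \gamma_1$ and $e_1 > e_2$, the left-hand side is strictly positive, a contradiction.

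The tie case needs care and is the only real subtlety. The sum argument above uses only the non-strict inequalities, so it already yields $e_1 \le e_2$ directly, with no strict inequality required. Hence the "smallest minimizer" convention is needed only so that $e(\gamma)$ is a well-defined single-valued function. The monotonicity conclusion holds for any selection from the argmin sets.

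For the second claim, monotonicity holds pointwise in $y_{1:T}$, and $e(\gamma) \in \{0, \dots, H\}$ is bounded. Measurability follows because $e(\gamma)$ is a minimum over finitely many measurable functions $C(e) - \gamma e$, selected by a measurable rule. Since the expectation is monotone, $D(\gamma_1) = \mathbb{E}[e(\gamma_1)] \le \mathbb{E}[e(\gamma_2)] = D(\gamma_2)$, so $D$ is non-decreasing. Beyond confirming this measurability, we anticipate no significant obstacle.
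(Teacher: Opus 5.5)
Your proposal is correct and follows the paper's proof: both sum the two optimality inequalities to get $(\gamma_2-\gamma_1)(e_1-e_2)\le 0$, then pass to $D(\gamma)$ by monotonicity of expectation. Your added remarks are sound refinements that the paper leaves implicit: the contradiction framing is unnecessary, the smallest-minimizer rule only makes $e(\gamma)$ single-valued (any selection from the argmin sets is monotone), and measurability holds.
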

\begin{proof}
    We have to prove that for a given $y_{1:T}$: \[ \gamma_2 > \gamma_1 \Rightarrow e(\gamma_2) \ge e(\gamma_1) \] Let $e_1 := e(\gamma_1)$ and $e_2 := e(\gamma_2)$. By optimality of $e_1$ at $\gamma_1$ and $e_2$ at $\gamma_2$, we have:
    \[
    r_{1:e_1} - \gamma_1 e_1 \le r_{1:e_2} - \gamma_1 e_2,
    \qquad
    r_{1:e_2} - \gamma_2 e_2 \le r_{1:e_1} - \gamma_2 e_1.
    \]
    Summing these inequalities gives
    \[
        (\gamma_1 - \gamma_2)(e_2 - e_1) \le 0.
    \]
    Since $\gamma_2 > \gamma_1$, the first factor is negative, hence $e_2 - e_1 \ge 0$, \ie\ $e_2 \ge e_1$.
    Linearity and monotonicity of expectation imply that $D(\gamma)$ is also non-decreasing.
\end{proof}

We can now prove the theorem.
\begin{proof}
We define the cumulative conditional risk at step $e$ as for a time series $y_{1:T}$ :
\begin{equation*}
    r_{1:e}\left(y_{1:T}\right) = \sum_{t=T+1}^{T+e} \rho_t\left(y_{1:T}\right) = \bbE_{Y_{T+1:T+H} \mid Y_{1:T}}\left[\sum_{t=T+1}^{T+e}\ell\left(y_t, f_t\left(y_{1:T}\right)\right) \mid Y_{1:T} = y_{1:T}\right],
    \end{equation*}

In the partial abstention setting, the start step is fixed at $s=1$. Thus, the selection function $g_{\texttt{PA}}$ needs only to determine the end step $e \in \{0, \dots, H\}$. The length of the forecast is $h(g_{\texttt{PA}}) = e$. The optimization problem in~\Eqref{eq:problem} can then be rewritten as:
\begin{equation*}
    \begin{aligned}
    \underset{e}{\min}\; \frac{\bbE_{Y_{1:T}} \left[  r_{1:e\left(y_{1:T}\right)}\left(y_{1:T}\right) \right]  }{\bbE_{Y_{1:T}}\left[e\left(y_{1:T}\right)\right]} \quad \text{s.t.} \quad \bbE_{Y_{1:T}}\left[e\left(y_{1:T}\right)\right] \ge cH
\end{aligned}
\end{equation*}

Let $N(e) := \bbE_{Y_{1:T}} \left[  r_{1:e\left(y_{1:T}\right)}\left(y_{1:T}\right)\right]$ and $D(e) := \bbE_{Y_{1:T}}\left[e\left(y_{1:T}\right)\right]$. Given the set of admissible selection functions $\calE^{\prime} = \{e: D(e) \geq cH\}$, by \Lemref{lem:fractional_to_linear} the objective to minimize becomes:
\begin{equation*}
    \underset{e \in \calE^{\prime}}{\min} \left\{N(e) - \lambda^* D(e)\right\},
\end{equation*}
or, by making the coverage constraint explicit:
\begin{equation*}
    \underset{e}{\min} \left\{N(e) - \lambda^* D(e)\right\} \quad \text{s.t.} \quad D(e) \ge cH.
\end{equation*}
Solving this is equivalent to the following parametric Lagrangian problem:
\begin{equation*}
   \begin{aligned}
    \calL(e,\lambda^*,\eta) = N(e) - \lambda^* D(e) + \eta (cH - D(e)) = N(e) - (\lambda^* + \eta) D(e) + \eta cH,
\end{aligned} 
\end{equation*}
where $\lambda^*$ comes from the linearized transformation in \Lemref{lem:fractional_to_linear} and $\eta\ge 0$ is the Lagrange multiplier associated with the coverage constraint.

Let $\gamma := \lambda + \eta$, the expectation is linear and $e$ operates to each time-series $y_{1:T}$ independently. Consequently, minimizing the objective with respect to $e$ is equivalent to selecting the end step that minimizes each local term independently:
 \begin{equation*}
 \begin{aligned}
     \underset{e}{\min} \; \bbE_{Y_{1:T}}\left[r_{1:e\left(y_{1:T}\right)}\left(y_{1:T}\right) -  \gamma e\left(y_{1:T}\right)\right] =  \bbE_{Y_{1:T}}\left[\underset{e \in \left\{0,\dots,H\right\}}{\min} \left(r_{1:e}\left(y_{1:T}\right) -  \gamma e\right)\right] 
 \end{aligned}
 \end{equation*}
Thus, for each sequence $y_{1:T}$, the optimal local decision is
\begin{equation*}
    e^*\left(y_{1:T}\right) = \underset{e \in \left\{0,\dots,H\right\}}{\arg \min} \left\{r_{1:e}\left(y_{1:T}\right) - \gamma e\right\}.
\end{equation*}
By \Lemref{lem:monotonicity_gamma}, the expected coverage $D({\gamma}) = \bbE_{Y_{1:T}}\left[e_{\gamma}(y_{1:T})\right]$ is non-decreasing in $\gamma$ and
\[
    \lim_{\gamma \to 0^+} D({\gamma})= 0, \qquad
    \lim_{\gamma \to +\infty} D({\gamma}) = H.
\]
Since we assume the joint vector of the conditional variances is absolutely continuous, the expected coverage function $D(\gamma)$ is absolutely continuous with respect to $\gamma$. By the Intermediate Value Theorem, and given the boundary conditions, there exists a $\gamma$ such that $D(\gamma)=cH$ exactly. 

Let $\eta^*$ be optimal dual variable associated with the constrained fractional problem, and let $\gamma^* = \lambda^* + \eta^*$. By construction, $e^* := e({\gamma^*})$ minimizes the Lagrangian $\calL(e, \eta^*)$.
The complementary slackness condition gives
\[
    \eta^* \left(cH - D(e^*)\right) = 0.
\]
Hence, two cases arise:
\begin{itemize}
    \item if $D(e^*) = cH$, the coverage constraint is active and $\eta^* \ge 0$ may be positive.
    In this case, $\lambda^* = \gamma^* - \eta^*$ and, by definition of the fractional optimum,
    $\lambda^* = \calR(e^*)$.
    \item if $D(e^*) > cH$, the constraint is inactive and $\eta^* = 0$,
    so that $\gamma^* = \lambda^*$.
\end{itemize}
By solving the Lagrangian with complementary slackness, we obtain $e^*$ that minimizes the linearized parametric problem under the constraint $\bbE_{Y_{1:T}}\left[e\left(y_{1:T}\right)\right] \ge cH$. By the equivalence established above, this $e^*$ also minimizes the original fractional objective under the same constraint. Therefore, $e^*$ is the solution to the constrained fractional problem.

Finally, the optimal selection function $g_{\texttt{PA}}$ is obtained as:
\[
g_{\texttt{PA}}^*\left(y_{1:T}\right) = \left(1, e^*\left(y_{1:T}\right)\right)
\]
\end{proof}

\subsection{Proof of Theorem 3}\label{app:proof_th3}
\begin{proof}
We first introduce an alternative formulation for the selection function $\bar{g_{\texttt{IA}}}(y_{1:T}) = (s(y_{1:T}), h(y_{1:T}))$, where $s(y_{1:T})$ is the start step and $h(y_{1:T})$ is the length of the forecast interval. Solving the problem in \Eqref{eq:problem} with respect to $g_{\texttt{IA}}$ is equivalent to solving the same problem with respect to $\bar{g_{\texttt{IA}}}$, since the first can be obtained from the second just by computing the end step as $e(y_{1:T}) = s(y_{1:T}) + h(y_{1:T}) - 1$. Thus, in the following, we will obtain the optimal $\bar{g_{\texttt{IA}}}^*$, from which the optimal $g_{\texttt{IA}}^*$ can be directly derived.

We denote the cumulative conditional risk for a forecast interval starting at step $s$ with interval length $h$ as:
\begin{equation*}
    r_{s:(s+h-1)}(y_{1:T}) = \sum_{t=T+s}^{T+s+h-1} \rho_t(y_{1:T}) = \bbE_{Y_{T+1:T+H} \mid Y_{1:T}}\left[\sum_{t=T+s}^{T+s+h-1}\ell\left(y_t, f_t\left(y_{1:T}\right)\right) \mid Y_{1:T} = y_{1:T}\right].
\end{equation*}
Given $\bar{g_{\texttt{IA}}}(y_{1:T}) = (s(y_{1:T}), h(y_{1:T}))$, the optimization problem in this setting can be written as:
\begin{equation*}
    \begin{aligned} 
    \underset{s,h}{\min}\frac{\bbE_{Y_{1:T}} \left[  r_{s:s+h-1}\left(y_{1:T}\right) \right]  }{\bbE_{Y_{1:T}}\left[h\right]} \quad \text{s.t.} \quad \bbE_{Y_{1:T}}\left[h\right] \ge cH
\end{aligned}
\end{equation*}
Following the result of \Lemref{lem:fractional_to_linear}, this constrained fractional problem is equivalent to minimizing the linearized Lagrangian objective:
\begin{equation*}
    \mathcal{L}(s, h, \gamma) = \mathbb{E}_{Y_{1:T}} \left[ r_{s:(s+h-1)}(y_{1:T})  - \gamma \cdot h \right] + \eta cH,
\end{equation*}
where $\gamma = \lambda^* + \eta$, $\lambda^*$ comes from the linearized transformation in \Lemref{lem:fractional_to_linear} and $\eta\ge 0$ is the Lagrange multiplier associated with the coverage constraint.

Since the expectation is linear and the optimization is made independently for each time series, we can minimize the inner term point-wise. Thus, for a fixed $\gamma$, the optimal selection strategy $\bar{g_{\texttt{IA}}}^*(y_{1:T}) = (s^*\left(y_{1:T}\right), h^*\left(y_{1:T}\right))$ is the minimizer of the local cost:
\begin{equation*}
    (s^*\left(y_{1:T}\right), h^*\left(y_{1:T}\right)) = \underset{\substack{s \in \{1, \dots, H\} \\ h \in \{0, \dots, H-s+1\}}}{\arg \min} \left( \sum_{t=T+s}^{T+s+h-1} \rho_t(y_{1:T}) - \gamma h \right).
\end{equation*}
where $(s^*\left(y_{1:T}\right), h^*\left(y_{1:T}\right)) = (1,0)$ means the entire horizon is rejected and the objective value is 0.
This optimization can be decomposed into two steps:
\begin{enumerate}
    \item \textbf{Best start for fixed length:} for each time series $y_{1:T}$, we compute a candidate start time $s_h\left(y_{1:T}\right)$ for every possible interval length $h \in \{0, \dots, H\}$ that minimizes the conditional risk for that specific duration:
    $$ s^*_h(y_{1:T}) = \underset{s \in \{1, \dots, H-h+1\}}{\arg \min} \sum_{t=T+s}^{T+s+h-1} \rho_t(y_{1:T}) $$
    \item \textbf{Optimal length:} Select the optimal duration $h^*$ that balances the minimal conditional risk against the length reward $\gamma$:
    $$ h^*\left(y_{1:T}\right) = \underset{h \in \{0, \dots, H\}}{\arg \min} \left( \sum_{t=T+s_h\left(y_{1:T}\right)}^{T+s_h\left(y_{1:T}\right)+h-1} \rho_t(y_{1:T}) - \gamma h \right) $$
\end{enumerate}
As in Theorem 2, under the assumption that the joint vector of conditional risks is absolutely continuous, the expected length $\mathbb{E}[h^*_{\gamma}\!\left(y_{1:T}\right)]$ is a continuous, non-decreasing function of $\gamma$. Thus, there exists a $\gamma$ such that the constraint $\mathbb{E}[h^*\!\left(y_{1:T}\right)] = c H$ is satisfied exactly.

Finally, the optimal selection function $g^*_{\texttt{IA}}$ is obtained as:
\[
    g^*_{\texttt{IA}}\left(y_{1:T}\right) = \left(s^*_{h^*}\left(y_{1:T}\right), s^*_{h^*}\left(y_{1:T}\right)+h^*\left(y_{1:T}\right) -1\right)
\]
\end{proof}

\section{Further discussion on continuity assumption} 
In \Thref{th:setting2} and \Thref{th:setting3}, we assume that the joint distribution of the conditional risk vector is absolutely continuous with respect to the Lebesgue measure. This assumption is required to guarantee the existence of a deterministic $\gamma$ that satisfies the target coverage exactly. While we acknowledge that this continuity may not hold in practice, it does not limit the practical applicability of our framework. As detailed in \Secref{sec:implementation}, the optimal strategy can be implemented via a randomized policy to handle discrete jumps in coverage. This guarantees exact constraint satisfaction and demonstrates robust performance in our empirical evaluation (see \Secref{sec:results}).

\section{Algorithmic implementations}\label{app:algorithms}
\subsection{\secondmethod}\label{app:algorithm_pabfor}
Algorithm~\ref{alg:pabfor} provides the full details for \secondmethod outlined in \Secref{sec:implementation} (Partial abstention). The algorithm first trains the forecaster and the conditional variance estimator on the training set, then it estimates $\hat{\gamma}_{\ell}$, $\hat{\gamma}_{r}$, and $p$ using the calibration set.
\begin{algorithm}
\caption{\secondmethod ($\calD_{\text{train}}$, $\calD_{\text{calib}}$, $c$, $H$, $\epsilon$)}
\textbf{Inputs}: $\calD_{\text{train}}$ - training data, $\calD_{\text{calib}}$ - calibration data, $c$ - target coverage, $H$ - Forecast horizon, $\epsilon$ - tolerance for $\gamma$

\textbf{Output}: $\hat{\gamma}_\ell$, $\hat{\gamma}_r$ - the lower and upper bounds for $\gamma$, $p$ - the probability to select the policy
\begin{algorithmic}[1]
\State for = Forecaster(); \; for.fit($\calD_{\text{train}}$)
\State $\hat{f}_{\text{calib}}$, $\hat{\sigma}^2_{\text{calib}}$ = for.predict($\calD_{\text{calib}}$)
\State $\hat{r}$ = $cumSum$($\hat{\sigma}^2_{\text{calib}}$, axis=0)
\State $\hat{r}$ = $hstack$([0,...0], r) \qquad  \qquad \qquad \qquad \qquad //  to account for the rejection of the full horizon
\State $\hat{\gamma}_{\ell} = 0$, $\hat{\gamma}_{r} = max(r)$ 
\While{$\hat{\gamma}_{r} - \hat{\gamma}_{\ell} > \epsilon$} 
    \State $\hat{\gamma} = .5 \left(\hat{\gamma}_{r} + \hat{\gamma}_{\ell}\right)$
    \State $\hat{e} = \left[\text{argmin}_e \left(\hat{r}(y)[e] - \hat{\gamma} e\right) \; \textbf{for} \; y \in \calD_{\text{calib}}\right]$
    \State $\hat{\phi}_{\hat{\gamma}} = mean(\hat{e})$

    \If{ $\hat{\phi}_{\hat{\gamma}} = cH$}
        \State $\hat{\gamma}_{\ell} = \hat{\gamma}_{r} = \hat{\gamma}$  \qquad \qquad \qquad // if coverage meets the target level, stop the binary search
    \ElsIf{$\hat{\phi}_{\hat{\gamma}} < cH$}
        \State $\hat{\gamma}_{\ell} = \hat{\gamma}$  \qquad \qquad \qquad \qquad \qquad \qquad // if coverage is below the target level, increase $\hat{\gamma}$
    \ElsIf{$\hat{\phi}_{\hat{\gamma}} > cH$}
        \State $\hat{\gamma}_{r} = \hat{\gamma}$ \qquad \qquad \qquad \qquad \qquad \qquad // if coverage is above the target level, decrease $\hat{\gamma}$
    \EndIf
\EndWhile
\State $\hat{e}_{\hat{\gamma}_{\ell}} = \left[\text{argmin}_e \left(\hat{r}(y)[e] - \hat{\gamma}_{\ell} e\right) \; \textbf{for} \; y \in \calD_{\text{calib}}\right]$
\State $\hat{e}_{\hat{\gamma}_{r}} = \left[\text{argmin}_e \left(\hat{r}(y)[e] - \hat{\gamma}_{r} e\right) \; \textbf{for} \; y \in \calD_{\text{calib}}\right]$
\State $\hat{\phi}_{\hat{\gamma_{\ell}}} = mean(\hat{e}_{\hat{\gamma_{\ell}}})$, $\hat{\phi}_{\hat{\gamma_{r}}} = mean(\hat{e}_{\hat{\gamma_{r}}})$
\State $p = \frac{cH - \hat{\phi}_{\hat{\gamma_{r}}}}{\hat{\phi}_{\hat{\gamma_{\ell}}} - \hat{\phi}_{\hat{\gamma_{r}}}}$
\State \textbf{return} $\hat{\gamma}_{\ell}$,  $\hat{\gamma}_r$, $p$
\end{algorithmic}
\label{alg:pabfor}
\end{algorithm}

\subsection{\thirdmethod}\label{app:algorithm_intabfor}
Algorithm~\ref{alg:intabfor} provides the full details for \thirdmethod outlined in \Secref{sec:implementation} (Interval abstention). The algorithm first trains the forecaster and the conditional variance estimator on the training set. Next, for each time-series in the calibration set, it identifies the subsequence with the lowest cumulative variance for every possible interval length. Finally, it estimates $\hat{\gamma}_{\ell}$, $\hat{\gamma}_{r}$, and $p$ considering only these optimal subsequences.
\begin{algorithm}
\caption{\thirdmethod ($\calD_{\text{train}}$, $\calD_{\text{calib}}$, $c$, $H$, $\epsilon$)}
\textbf{Inputs}: $\calD_{\text{train}}$ - training data, $\calD_{\text{calib}}$ - calibration data, $c$ - target coverage, $H$ - Forecast horizon, $\epsilon$ - tolerance for $\gamma$

\textbf{Output}: $\hat{\gamma}_\ell$, $\hat{\gamma}_r$ - the lower and upper bounds for $\gamma$, $p$ - the probability to select the policy
\begin{algorithmic}[1]
\State for = Forecaster(); \; for.fit($\calD_{\text{train}}$)
\State $\hat{f}_{\text{calib}}$, $\hat{\sigma}^2_{\text{calib}}$ = for.predict($\calD_{\text{calib}}$)
\For{h in $\{..H\}$} 
    \State $\hat{s}[h] = \left[\text{argmin}_{s} \left(\hat{\sigma}^2(y)[s\!:\!s+h] \right) \; \textbf{for} \; y \in \calD_{\text{calib}}\right]$ \; // compute the start of the best interval
\EndFor
\State $\hat{\gamma}_{\ell} = 0$, $\hat{\gamma}_{r} = max(r)$ 
\While{$\hat{\gamma}_{r} - \hat{\gamma}_{\ell} > \epsilon$} 
    \State $\hat{\gamma} = .5 \left(\hat{\gamma}_{r} + \hat{\gamma}_{\ell}\right)$
    \State $\hat{h} = \left[\text{argmin}_h \left(\hat{\sigma}^2(y)\left[\hat{t}_s[h](y)\!:\!\hat{t}_s[h](y)\!+\!h\right] - \hat{\gamma} h\right) \; \textbf{for} \; y \in \calD_{\text{calib}}\right]$
    \State $\hat{\phi}_{\hat{\gamma}} = mean(\hat{h})$

    \If{ $\hat{\phi}_{\hat{\gamma}} = cH$}
        \State $\hat{\gamma}_{\ell} = \hat{\gamma}_{r} = \hat{\gamma}$ \qquad \qquad \qquad // if coverage meets the target level, stop the binary search
    \ElsIf{$\hat{\phi}_{\hat{\gamma}} < cH$}
        \State $\hat{\gamma}_{\ell} = \hat{\gamma}$ \qquad \qquad \qquad \qquad \qquad \qquad // if coverage is below the target level, increase $\hat{\gamma}$
    \ElsIf{$\hat{\phi}_{\hat{\gamma}} > cH$}
        \State $\hat{\gamma}_{r} = \hat{\gamma}$  \qquad \qquad \qquad \qquad \qquad \qquad // if coverage is above the target level, decrease $\hat{\gamma}$
    \EndIf
\EndWhile
\State $\hat{h}_{\hat{\gamma}_{\ell}} = \left[\text{argmin}_h \left(\hat{\sigma}^2(y)\left[\hat{s}[h](y)\!:\!\hat{s}[h](y)\!+\!h\right] - \hat{\gamma}_{\hat{\gamma}_{\ell}} h\right) \; \textbf{for} \; y \in \calD_{\text{calib}}\right]$
\State $\hat{h}_{\hat{\gamma}_{r}} = \left[\text{argmin}_h \left(\hat{\sigma}^2(y)\left[\hat{s}[h](y)\!:\!\hat{s}[h](y)\!+\!h\right] - \hat{\gamma}_{\hat{\gamma}_{r}} h\right) \; \textbf{for} \; y \in \calD_{\text{calib}}\right]$
\State $\hat{\phi}_{\hat{\gamma_{\ell}}} = mean(\hat{h}_{\hat{\gamma_{\ell}}})$, $\hat{\phi}_{\hat{\gamma_{r}}} = mean(\hat{h}_{\hat{\gamma_{r}}})$
\State $p = \frac{cH - \hat{\phi}_{\hat{\gamma_{r}}}}{\hat{\phi}_{\hat{\gamma_{\ell}}} - \hat{\phi}_{\hat{\gamma_{r}}}}$
\State \textbf{return} $\hat{\gamma}_{\ell}$,  $\hat{\gamma}_r$, $p$
\end{algorithmic}
\label{alg:intabfor}
\end{algorithm}
\section{Experimental setup : additional details} \label{app:experiments}
\subsection{Data}\label{app:data}
Table~\ref{tab:datasets} summarizes the characteristics of the 24 publicly available datasets used in our evaluation. Prior to training, we apply min-max normalization to the data. Detailed descriptions of each dataset and specific preprocessing steps are provided in the following paragraphs.
\begin{table*}[ht]
\centering
\caption{Characteristics of the five real-world datasets and $19$ benchmark datasets used for the experiments.}
\begin{tabular}{lccccc}
\toprule
dataset     & \# (training) & \# (calibration) & \# (test) & T & H \\
\midrule
{\tt chlorine} & 2584 & 861 & 861& 150 & 16 \\
{\tt covid}       & 228  & 76 & 76 & 100    & 10   \\
{\tt ERA5} & 1228 & 410 & 410    & 358    & 7      \\
{\tt ecg} & 3000 & 1000 & 1000 & 130 & 10 \\
{\tt eeg}    & 23040 & 7680 & 7680 & 40     & 10      \\
{\tt faceall} & 1350 & 450 & 450 & 111 & 20 \\
{\tt fiftywords} & 543 & 181 & 181 & 250 & 20 \\
{\tt ford} & 2952 & 984 & 984 & 450 & 50 \\
{\tt freezer} & 1800 & 600 & 600 & 271 & 30 \\
{\tt handoutlines} & 822 & 274 & 274 & 270 & 30 \\
{\tt inlineskate} & 390 & 130 & 130 & 200 & 20 \\
{\tt ITpower} & 657 & 219 & 219 & 18 & 6 \\
{\tt melbourne} & 1365 & 455 & 455 & 17 & 6 \\
{\tt middlephalanx} & 534 & 178 & 178 & 70 & 10 \\
{\tt mixedshapes} & 1755 & 585 & 585 & 1000 & 24 \\
{\tt fetalecg} & 2259 & 753 & 753 & 700 & 50 \\
{\tt refrigeration} & 450 & 150 & 150 & 700 & 20 \\
{\tt sonyaibrobot} & 372 & 124 & 124 & 50 & 15 \\
{\tt swedishleaf} & 675 & 225 & 225 & 100 & 28 \\
{\tt symbols} & 612 & 204 & 204 & 350 & 48 \\
{\tt temperature}  & 500 & 166  & 166   & 698    & 30      \\
{\tt wafer} & 4298 & 1432 & 1432 & 140 & 12 \\
{\tt wordrecognition} & 345 & 115 & 115 & 134 & 10 \\
{\tt yoga} & 1980 & 660 & 660 & 400 & 26 \\
\bottomrule
\end{tabular}
\label{tab:datasets}
\end{table*}
\paragraph{\texttt{eeg}~\citep{kelly2023uci}} The data is available at~\url{https://archive.ics.uci.edu/ml/datasets/EEG+Database}. We used the medium version of the dataset, aiming to forecast the responses of ten control subjects to visual stimuli of three types. We excluded alcoholic subjects from our experiments, as prior work and summary statistics indicate that EEG responses from control subjects are more challenging to predict~\citep{stankeviciute2021conformal}. Each subject had repeated trials for each type of stimulus, and each trial produced a time series for $64$ EEG channels, corresponding to $64$ sensors. We treated each individual trial and each channel as a separate time-series resulting in a total of $38400$ time-series. Following~\citet{stankeviciute2021conformal}, we downsampled the sequences (originally of length 255) to a length of 50, where we used the first $40$ time steps as input and the final $10$ as the prediction horizon. 
\paragraph{\texttt{covid}~\citep{ukhsa2025}} The data is available at~\url{https://coronavirus.data.gov.uk/}. We used regional-level data from the United Kingdom, divided into $380$ lower-tier local authorities, resulting in $380$ time series. We considered daily new COVID-19 cases over a $110$-day period, starting in mid-September $2020$ and ending in January $2021$. Each time series was split into an input sequence of $100$ days and a prediction horizon of $10$ days.

\paragraph{\texttt{temperature}~\citep{moutadid2023subseasonal}} The data was retrieved from~\url{https://github.com/microsoft/subseasonal_data/}. This dataset contains weather data for $832$ different locations in the United States. Our objective is to forecast the daily temperature computed as the average between the max and the min temperature for the years $2022-2023$. Each time series (\ie location) is split into an input sequence of $698$ days and a forecast horizon of $30$ days (\ie the final month). 

\paragraph{\texttt{ERA5}~\citep{nguyen2023climate}} The data is available at~\url{https://github.com/aditya-grover/climate-learn}. Since the full ERA5 dataset is too large for most deep learning models, ClimateLearn provides access to a reduced version derived from WeatherBench~\citep{rasp2020weatherbench}. This version regrids the raw data to a lower spatial resolution of $5.625^\circ$, resulting in a total of $2048$ distinct geographic locations (\ie time-series). Our objective is to forecast the daily maximum temperature for the year $2018$. Each time series is split into an input sequence of $358$ days and a prediction horizon of $7$ days (\ie the final week). 

\paragraph{UCR repository~\citep{dau2019ucr}} The remaining datasets were retrieved from the UCR repository. We removed the original target variable and treat each dataset as a forecasting task, with different forecasting horizons ranging from $6$ to $50$ steps. 

\section{Experiments: extended results}
\subsection{Q1: detailed results}\label{sec:Q1app}
For completeness, we provide the detailed tabular results corresponding to the experiments in Q1. \Tabref{tab:Q1app} reports the average empirical selective risk $\hat{\calR}$ ($\pm$ std) for all datasets and methods across six target coverage levels in the full abstention setting.

Additionally, \Figref{fig:Q1cd} shows the Critical Difference (CD) diagrams for the statistical tests conducted in Q1. These plots confirm that \firstmethod significantly outperforms the competitors, while the other baselines perform similarly across all considered target coverage levels.
\begin{longtable}{l|c|c|ccc}
\caption{Each dataset's average empirical selective risk ($\pm$ std) for all methods across six target coverages in the full abstention setting. \firstmethod achieves a lower selective risk than the baselines in most of the dataset-coverage pairs.}
  \label{tab:Q1app}\\
\toprule
  dataset & c & \firstmethod & \AdaptiveCF & \DPRNN & \MQRNN \\
\cline{1-6}
\endfirsthead
\toprule
dataset & c & \firstmethod & \AdaptiveCF & \DPRNN & \MQRNN \\
\cline{1-6}
\endhead 
{\tt chlorine} & .70 & \textbf{.0048 $\pm$ .0003} & .0057 $\pm$ .0006 & .0066 $\pm$ .0006 & .0056 $\pm$ .0006 \\*
 & .75 & \textbf{.0049 $\pm$ .0003} & .0058 $\pm$ .0006 & .0065 $\pm$ .0006 & .0057 $\pm$ .0006 \\*
 & .80 & \textbf{.0051 $\pm$ .0003} & .0059 $\pm$ .0006 & .0066 $\pm$ .0006 & .0059 $\pm$ .0006 \\*
 & .85 & \textbf{.0053 $\pm$ .0003} & .0060 $\pm$ .0006 & .0066 $\pm$ .0006 & .0060 $\pm$ .0005 \\*
 & .90 & \textbf{.0056 $\pm$ .0004} & .0062 $\pm$ .0006 & .0065 $\pm$ .0006 & .0062 $\pm$ .0005 \\*
 & .95 & \textbf{.0059 $\pm$ .0004} & .0064 $\pm$ .0006 & .0065 $\pm$ .0006 & .0064 $\pm$ .0005 \\*
\cline{1-6}
{\tt ERA5} & .70 & \textbf{.0044 $\pm$ .0006} & .0062 $\pm$ .0014 & .0136 $\pm$ .0019 & .0067 $\pm$ .0014 \\*
 & .75 & \textbf{.0056 $\pm$ .0009} & .0071 $\pm$ .0012 & .0140 $\pm$ .0014 & .0076 $\pm$ .0012 \\*
 & .80 & \textbf{.0068 $\pm$ .0007} & .0085 $\pm$ .0011 & .0145 $\pm$ .0011 & .0087 $\pm$ .0011 \\*
 & .85 & \textbf{.0089 $\pm$ .0014} & .0099 $\pm$ .0013 & .0149 $\pm$ .0012 & .0103 $\pm$ .0012 \\*
 & .90 & \textbf{.0108 $\pm$ .0015} & .0120 $\pm$ .0013 & .0157 $\pm$ .0010 & .0122 $\pm$ .0019 \\*
 & .95 & \textbf{.0135 $\pm$ .0020} & .0145 $\pm$ .0018 & .0166 $\pm$ .0011 & .0146 $\pm$ .0013 \\*
\cline{1-6}
{\tt handoutlines} & .70 & \textbf{.0045 $\pm$ .0021} & .0055 $\pm$ .0023 & .0071 $\pm$ .0030 & .0110 $\pm$ .0089 \\*
 & .75 & \textbf{.0051 $\pm$ .0027} & .0054 $\pm$ .0021 & .0070 $\pm$ .0028 & .0122 $\pm$ .0116 \\*
 & .80 & \textbf{.0057 $\pm$ .0030} & .0059 $\pm$ .0028 & .0071 $\pm$ .0026 & .0120 $\pm$ .0108 \\*
 & .85 & \textbf{.0062 $\pm$ .0029} & .0063 $\pm$ .0027 & .0071 $\pm$ .0025 & .0121 $\pm$ .0100 \\*
 & .90 & \textbf{.0064 $\pm$ .0031} & \textbf{.0064 $\pm$ .0027} & .0072 $\pm$ .0026 & .0139 $\pm$ .0149 \\*
 & .95 & .0084 $\pm$ .0062 & .0101 $\pm$ .0062 & \textbf{.0078 $\pm$ .0023} & .0142 $\pm$ .0136 \\*
\cline{1-6}
{\tt inlineskate} & .70 & \textbf{.0031 $\pm$ .0005} & .0036 $\pm$ .0018 & .0036 $\pm$ .0004 & .0049 $\pm$ .0019 \\*
 & .75 & \textbf{.0032 $\pm$ .0006} & .0037 $\pm$ .0017 & .0036 $\pm$ .0004 & .0051 $\pm$ .0018 \\*
 & .80 & \textbf{.0034 $\pm$ .0005} & .0038 $\pm$ .0016 & .0039 $\pm$ .0002 & .0051 $\pm$ .0016 \\*
 & .85 & \textbf{.0038 $\pm$ .0005} & .0039 $\pm$ .0015 & \textbf{.0038 $\pm$ .0002} & .0052 $\pm$ .0016 \\*
 & .90 & \textbf{.0039 $\pm$ .0005} & .0042 $\pm$ .0014 & .0041 $\pm$ .0003 & .0051 $\pm$ .0015 \\*
 & .95 & \textbf{.0041 $\pm$ .0004} & .0046 $\pm$ .0013 & .0042 $\pm$ .0004 & .0052 $\pm$ .0014 \\*
\cline{1-6}
{\tt freezer} & .70 & \textbf{.0026 $\pm$ .0026} & .0128 $\pm$ .0183 & .0329 $\pm$ .0230 & .0038 $\pm$ .0039 \\*
 & .75 & \textbf{.0029 $\pm$ .0026} & .0131 $\pm$ .0180 & .0437 $\pm$ .0285 & .0045 $\pm$ .004 \\*
 & .80 & \textbf{.0106 $\pm$ .0139} & .0162 $\pm$ .0176 & .0542 $\pm$ .0367 & .0140 $\pm$ .0124 \\*
 & .85 & \textbf{.0170 $\pm$ .0141} & .0242 $\pm$ .0155 & .0620 $\pm$ .0400 & .0224 $\pm$ .0136 \\*
 & .90 & \textbf{.0238 $\pm$ .0139} & .0320 $\pm$ .0174 & .0627 $\pm$ .0401 & .0307 $\pm$ .0188 \\*
 & .95 & \textbf{.0340 $\pm$ .0154} & .0440 $\pm$ .0294 & .0622 $\pm$ .0410 & .0427 $\pm$ .0305 \\*
\cline{1-6}
{\tt wafer} & .70 & \textbf{.0002 $\pm$ .0001} & .0023 $\pm$ .0035 & .0097 $\pm$ .0030 & .0009 $\pm$ .0005 \\*
 & .75 & \textbf{.0002 $\pm$ .0001} & .0025 $\pm$ .0036 & .0106 $\pm$ .0035 & .0010 $\pm$ .0004 \\*
 & .80 & \textbf{.0007 $\pm$ .0014} & .0027 $\pm$ .0035 & .0107 $\pm$ .0032 & .0010 $\pm$ .0004 \\*
 & .85 & \textbf{.0010 $\pm$ .0014} & .0029 $\pm$ .0034 & .0111 $\pm$ .0034 & .0011 $\pm$ .0004 \\*
 & .90 & \textbf{.0012 $\pm$ .0013} & .0031 $\pm$ .0033 & .0109 $\pm$ .0031 & .0015 $\pm$ .0006 \\*
 & .95 & \textbf{.0014 $\pm$ .0013} & .0035 $\pm$ .0033 & .0109 $\pm$ .0027 & .0020 $\pm$ .0005 \\*
\cline{1-6}
{\tt ecg} & .70 & \textbf{.0593 $\pm$ .0043} & .0819 $\pm$ .0130 & .1317 $\pm$ .0166 & .0789 $\pm$ .0103 \\*
 & .75 & \textbf{.0631 $\pm$ .0045} & .0867 $\pm$ .0121 & .1287 $\pm$ .0163 & .0827 $\pm$ .0107 \\*
 & .80 & \textbf{.0683 $\pm$ .0065} & .0923 $\pm$ .0140 & .1283 $\pm$ .0166 & .0874 $\pm$ .0117 \\*
 & .85 & \textbf{.0736 $\pm$ .0072} & .0982 $\pm$ .0131 & .1274 $\pm$ .0157 & .0934 $\pm$ .0122 \\*
 & .90 & \textbf{.0839 $\pm$ .0072} & .1038 $\pm$ .0139 & .1261 $\pm$ .0149 & .0992 $\pm$ .0131 \\*
 & .95 & \textbf{.0957 $\pm$ .0072} & .1116 $\pm$ .0126 & .1245 $\pm$ .0146 & .1076 $\pm$ .0133 \\*
\cline{1-6}
{\tt ITpower} & .70 & \textbf{.0669 $\pm$ .0104} & .0913 $\pm$ .0209 & .0874 $\pm$ .0147 & .0927 $\pm$ .0192 \\*
 & .75 & \textbf{.0770 $\pm$ .0126} & .0991 $\pm$ .0209 & .0898 $\pm$ .0152 & .0971 $\pm$ .0191 \\*
 & .80 & \textbf{.0826 $\pm$ .0109} & .1034 $\pm$ .0202 & .0904 $\pm$ .0129 & .0984 $\pm$ .0178 \\*
 & .85 & \textbf{.0883 $\pm$ .0125} & .1078 $\pm$ .0185 & .0931 $\pm$ .0132 & .1028 $\pm$ .0190 \\*
 & .90 & \textbf{.0972 $\pm$ .0097} & .1113 $\pm$ .0179 & .1017 $\pm$ .0136 & .1063 $\pm$ .0178 \\*
 & .95 & \textbf{.1079 $\pm$ .0126} & .1172 $\pm$ .0158 & .1104 $\pm$ .0119 & .1112 $\pm$ .0149 \\*
\cline{1-6}
{\tt melbourne} & .70 & \textbf{.0336 $\pm$ .0074} & .0866 $\pm$ .0861 & .0755 $\pm$ .0214 & .0637 $\pm$ .0264 \\*
 & .75 & \textbf{.0437 $\pm$ .0135} & .0895 $\pm$ .0821 & .0743 $\pm$ .0199 & .0670 $\pm$ .0246 \\*
 & .80 & \textbf{.0542 $\pm$ .0188} & .0920 $\pm$ .0782 & .0731 $\pm$ .0189 & .0709 $\pm$ .0236 \\*
 & .85 & \textbf{.0598 $\pm$ .0173} & .1014 $\pm$ .0801 & .0739 $\pm$ .0199 & .0754 $\pm$ .0238 \\*
 & .90 & .0901 $\pm$ .0599 & .1087 $\pm$ .0744 & \textbf{.0766 $\pm$ .0203} & .0855 $\pm$ .0328 \\*
 & .95 & .1070 $\pm$ .0607 & .1182 $\pm$ .0713 & \textbf{.0909 $\pm$ .0193} & .0977 $\pm$ .0300 \\*
\cline{1-6}
{\tt mixedshapes} & .70 & \textbf{.0291 $\pm$ .0033} & .0397 $\pm$ .0066 & .0542 $\pm$ .0058 & .0421 $\pm$ .0084 \\*
 & .75 & \textbf{.0322 $\pm$ .0034} & .0425 $\pm$ .0058 & .0548 $\pm$ .0057 & .0453 $\pm$ .0085 \\*
 & .80 & \textbf{.0358 $\pm$ .0028} & .0454 $\pm$ .0066 & .0554 $\pm$ .0051 & .0478 $\pm$ .0084 \\*
 & .85 & \textbf{.0393 $\pm$ .0027} & .0486 $\pm$ .0059 & .0562 $\pm$ .0056 & .0526 $\pm$ .0075 \\*
 & .90 & \textbf{.0435 $\pm$ .0025} & .0541 $\pm$ .0063 & .0581 $\pm$ .0062 & .0561 $\pm$ .0071 \\*
 & .95 & \textbf{.0506 $\pm$ .0042} & .0579 $\pm$ .0057 & .0605 $\pm$ .0053 & .0595 $\pm$ .0068 \\*
\cline{1-6}
{\tt symbols} & .70 & \textbf{.0263 $\pm$ .0133} & .0352 $\pm$ .0092 & .0464 $\pm$ .0157 & .0453 $\pm$ .0115 \\*
 & .75 & \textbf{.0260 $\pm$ .0124} & .0378 $\pm$ .0104 & .0465 $\pm$ .0137 & .0465 $\pm$ .0124 \\*
 & .80 & \textbf{.0273 $\pm$ .0113} & .0391 $\pm$ .0094 & .0472 $\pm$ .0122 & .0457 $\pm$ .0115 \\*
 & .85 & \textbf{.0295 $\pm$ .0098} & .0407 $\pm$ .0086 & .0461 $\pm$ .0109 & .0460 $\pm$ .0098 \\*
 & .90 & \textbf{.0311 $\pm$ .0092} & .0430 $\pm$ .0081 & .0469 $\pm$ .0108 & .0463 $\pm$ .0095 \\*
 & .95 & \textbf{.0344 $\pm$ .0093} & .0466 $\pm$ .0097 & .0471 $\pm$ .0100 & .0478 $\pm$ .0082 \\*
\cline{1-6}
{\tt yoga} & .70 & \textbf{.0194 $\pm$ .0039} & .0223 $\pm$ .0020 & .0322 $\pm$ .0055 & .0244 $\pm$ .0049 \\*
 & .75 & \textbf{.0202 $\pm$ .0041} & .0231 $\pm$ .0022 & .0326 $\pm$ .0054 & .0249 $\pm$ .0046 \\*
 & .80 & \textbf{.0215 $\pm$ .0042} & .0244 $\pm$ .0018 & .0329 $\pm$ .0050 & .0260 $\pm$ .0044 \\*
 & .85 & \textbf{.0227 $\pm$ .0042} & .0257 $\pm$ .0019 & .0337 $\pm$ .0045 & .0269 $\pm$ .0046 \\*
 & .90 & \textbf{.0243 $\pm$ .0045} & .0280 $\pm$ .0048 & .0342 $\pm$ .0042 & .0279 $\pm$ .0046 \\*
 & .95 & \textbf{.0255 $\pm$ .0048} & .0297 $\pm$ .0045 & .0343 $\pm$ .0042 & .0293 $\pm$ .0042 \\*
\cline{1-6}
{\tt covid} & .70 & \textbf{.2950 $\pm$ .2312} & .3029 $\pm$ .0946 & .2979 $\pm$ .1309 & .4046 $\pm$ .1386 \\*
 & .75 & .2959 $\pm$ .1931 & .3081 $\pm$ .0888 & \textbf{.2893 $\pm$ .1275} & .4170 $\pm$ .1338 \\*
 & .80 & .2970 $\pm$ .1755 & .3124 $\pm$ .0890 & \textbf{.2910 $\pm$ .1225} & .4110 $\pm$ .1297 \\*
 & .85 & .3002 $\pm$ .1653 & .3175 $\pm$ .0847 & \textbf{.2896 $\pm$ .1146} & .4260 $\pm$ .1218 \\*
 & .90 & .3162 $\pm$ .1599 & .3271 $\pm$ .0817 & \textbf{.3032 $\pm$ .1111} & .4155 $\pm$ .1130 \\*
 & .95 & .3464 $\pm$ .1417 & .3381 $\pm$ .0721 & \textbf{.3212 $\pm$ .1031} & .4175 $\pm$ .1112 \\*
\cline{1-6}
{\tt eeg} & .70 & \textbf{.1445 $\pm$ .0044} & .1623 $\pm$ .0147 & .2235 $\pm$ .0316 & .1669 $\pm$ .0143 \\*
 & .75 & \textbf{.1554 $\pm$ .0049} & .1753 $\pm$ .0179 & .2303 $\pm$ .0344 & .1806 $\pm$ .0155 \\*
 & .80 & \textbf{.1677 $\pm$ .0054} & .1929 $\pm$ .0231 & .2379 $\pm$ .0350 & .1955 $\pm$ .0171 \\*
 & .85 & \textbf{.1810 $\pm$ .0053} & .2109 $\pm$ .0256 & .2436 $\pm$ .0357 & .2126 $\pm$ .0217 \\*
 & .90 & \textbf{.1974 $\pm$ .0061} & .2352 $\pm$ .0301 & .2538 $\pm$ .0369 & .2322 $\pm$ .0279 \\*
 & .95 & \textbf{.2218 $\pm$ .0065} & .2694 $\pm$ .0382 & .2719 $\pm$ .0379 & .2588 $\pm$ .0298 \\*
\cline{1-6}
{\tt fiftywords} & .70 & \textbf{.2781 $\pm$ .1311} & .3999 $\pm$ .0639 & .3244 $\pm$ .0803 & .4310 $\pm$ .1196 \\*
 & .75 & \textbf{.2822 $\pm$ .1250} & .4152 $\pm$ .0633 & .3698 $\pm$ .1070 & .4430 $\pm$ .1339 \\*
 & .80 & \textbf{.3025 $\pm$ .1158} & .4299 $\pm$ .0756 & .3908 $\pm$ .1085 & .4727 $\pm$ .1185 \\*
 & .85 & \textbf{.3358 $\pm$ .1096} & .4643 $\pm$ .0898 & .4033 $\pm$ .1133 & .4805 $\pm$ .1199 \\*
 & .90 & \textbf{.3641 $\pm$ .1081} & .5195 $\pm$ .0811 & .4395 $\pm$ .1241 & .5056 $\pm$ .1231 \\*
 & .95 & \textbf{.4168 $\pm$ .0917} & .5681 $\pm$ .0870 & .5307 $\pm$ .0972 & .5457 $\pm$ .1122 \\*
\cline{1-6}
{\tt wordrecognition} & .70 & \textbf{.1366 $\pm$ .0266} & .1572 $\pm$ .0245 & .1951 $\pm$ .0167 & .1645 $\pm$ .0297  \\*
 & .75 & \textbf{.1464 $\pm$ .0232} & .1599 $\pm$ .0263 & .1981 $\pm$ .0180 & .1709 $\pm$ .0252  \\*
 & .80 & \textbf{.1524 $\pm$ .0235} & .1638 $\pm$ .0234 & .2023 $\pm$ .0225 & .1733 $\pm$ .0264  \\*
 & .85 & \textbf{.1581 $\pm$ .0236} & .1690 $\pm$ .0222 & .1997 $\pm$ .0228 & .1785 $\pm$ .0243  \\*
 & .90 & \textbf{.1641 $\pm$ .0247} & .1723 $\pm$ .0187 & .1979 $\pm$ .0192 & .1850 $\pm$ .0198  \\*
 & .95 & \textbf{.1700 $\pm$ .0228} & .1859 $\pm$ .0175 & .1960 $\pm$ .0206 & .1886 $\pm$ .0183  \\*
\cline{1-6}
{\tt middlephalanx} & .70 & \textbf{.1180 $\pm$ .0320} & .1425 $\pm$ .0330 & .1742 $\pm$ .0294 & .1604 $\pm$ .0323 \\*
 & .75 & \textbf{.1233 $\pm$ .0307} & .1473 $\pm$ .0291 & .1664 $\pm$ .0274 & .1672 $\pm$ .0305 \\*
 & .80 & \textbf{.1304 $\pm$ .0286} & .1600 $\pm$ .0268 & .1610 $\pm$ .0252 & .1749 $\pm$ .0281 \\*
 & .85 & \textbf{.1405 $\pm$ .0274} & .1658 $\pm$ .0286 & .1595 $\pm$ .0236 & .1790 $\pm$ .0284 \\*
 & .90 & \textbf{.1505 $\pm$ .0276} & .1701 $\pm$ .0255 & .1606 $\pm$ .0220 & .1847 $\pm$ .0259 \\*
 & .95 & .1654 $\pm$ .0240 & .1799 $\pm$ .0207 & \textbf{.1644 $\pm$ .0197} & .1931 $\pm$ .0245 \\*
\cline{1-6}
{\tt swedishleaf} & .70 & .1735 $\pm$ .0432 & .1658 $\pm$ .0380 & .2684 $\pm$ .0393 & \textbf{.1619 $\pm$ .0154} \\*
 & .75 & .2176 $\pm$ .0461 & .2012 $\pm$ .0381 & .2992 $\pm$ .0484 & \textbf{.1988 $\pm$ .0311} \\*
 & .80 & .2664 $\pm$ .0490 & .2488 $\pm$ .0441 & .3192 $\pm$ .0479 & \textbf{.2281 $\pm$ .0285} \\*
 & .85 & .3164 $\pm$ .0440 & .2826 $\pm$ .0431 & .3455 $\pm$ .0584 & \textbf{.2754 $\pm$ .0341} \\*
 & .90 & .3641 $\pm$ .0462 & .3397 $\pm$ .0540 & .3752 $\pm$ .0554 & \textbf{.3185 $\pm$ .0423} \\*
 & .95 & .4369 $\pm$ .0369 & .4014 $\pm$ .0560 & .4115 $\pm$ .0511 & \textbf{.3750 $\pm$ .0431} \\*
\cline{1-6}
{\tt faceall} & .70 & \textbf{.3953 $\pm$ .0438} & .4719 $\pm$ .0732 & .4972 $\pm$ .0259 & .4738 $\pm$ .0562 \\*
 & .75 & \textbf{.4160 $\pm$ .0408} & .4924 $\pm$ .0725 & .5124 $\pm$ .0285 & .4974 $\pm$ .0562 \\*
 & .80 & \textbf{.4333 $\pm$ .0395} & .5205 $\pm$ .0718 & .5349 $\pm$ .0250 & .5223 $\pm$ .0530 \\*
 & .85 & \textbf{.4588 $\pm$ .0393} & .5580 $\pm$ .0905 & .5618 $\pm$ .0298 & .5434 $\pm$ .0516 \\*
 & .90 & \textbf{.4865 $\pm$ .0371} & .5870 $\pm$ .0830 & .5721 $\pm$ .0356 & .5658 $\pm$ .0542 \\*
 & .95 & \textbf{.5446 $\pm$ .0466} & .6583 $\pm$ .1043 & .6039 $\pm$ .0299 & .6051 $\pm$ .0473 \\*
\cline{1-6}
{\tt fetalecg} & .70 & \textbf{.3919 $\pm$ .0863} & .4755 $\pm$ .1081 & .4922 $\pm$ .0938 & .4978 $\pm$ .1245 \\*
 & .75 & \textbf{.4058 $\pm$ .0956} & .4945 $\pm$ .1115 & .5003 $\pm$ .0953 & .5058 $\pm$ .1239 \\*
 & .80 & \textbf{.4141 $\pm$ .1007} & .5081 $\pm$ .1156 & .5080 $\pm$ .0972 & .5150 $\pm$ .125 \\*
 & .85 & \textbf{.4204 $\pm$ .0996} & .5223 $\pm$ .1176 & .5195 $\pm$ .1006 & .5218 $\pm$ .1235 \\*
 & .90 & \textbf{.4317 $\pm$ .0983} & .5351 $\pm$ .1149 & .5329 $\pm$ .1052 & .5285 $\pm$ .1225 \\*
 & .95 & \textbf{.4414 $\pm$ .0963} & .5498 $\pm$ .1193 & .5420 $\pm$ .1071 & .5388 $\pm$ .1194 \\*
\cline{1-6}
{\tt ford} & .70 & .4816 $\pm$ .0162 & .4898 $\pm$ .0247 & .5745 $\pm$ .0155 & \textbf{.4708 $\pm$ .0368} \\*
 & .75 & .4958 $\pm$ .0159 & .5041 $\pm$ .0215 & .5746 $\pm$ .0140 & \textbf{.4882 $\pm$ .0336} \\*
 & .80 & .5109 $\pm$ .0156 & .5182 $\pm$ .0179 & .5744 $\pm$ .0142 & \textbf{.5058 $\pm$ .0320} \\*
 & .85 & .5229 $\pm$ .0175 & .5364 $\pm$ .0151 & .5749 $\pm$ .0140 & \textbf{.5225 $\pm$ .0297} \\*
 & .90 & \textbf{.5338 $\pm$ .0168} & .5504 $\pm$ .0143 & .5731 $\pm$ .0147 & .5388 $\pm$ .0257 \\*
 & .95 & \textbf{.5482 $\pm$ .0186} & .5624 $\pm$ .0154 & .5733 $\pm$ .0136 & .5563 $\pm$ .0205 \\*
\cline{1-6}
{\tt refrigeration} & .70 & \textbf{.5739 $\pm$ .0763} & .7427 $\pm$ .1127 & .8270 $\pm$ .0889 & .7268 $\pm$ .1109 \\*
 & .75 & \textbf{.5896 $\pm$ .0732} & .7472 $\pm$ .1151 & .8178 $\pm$ .0896 & .7411 $\pm$ .1060 \\*
 & .80 & \textbf{.6071 $\pm$ .0840} & .7615 $\pm$ .1128 & .8113 $\pm$ .0907 & .7512 $\pm$ .0989 \\*
 & .85 & \textbf{.6181 $\pm$ .0829} & .7826 $\pm$ .1052 & .8076 $\pm$ .0898 & .7620 $\pm$ .0972 \\*
 & .90 & \textbf{.6319 $\pm$ .0746} & .7946 $\pm$ .1104 & .8131 $\pm$ .0943 & .7715 $\pm$ .1026 \\*
 & .95 & \textbf{.6607 $\pm$ .0778} & .8104 $\pm$ .1071 & .8151 $\pm$ .1031 & .8099 $\pm$ .1133 \\*
\cline{1-6}
{\tt sonyaiborobot} & .70 & \textbf{.4444 $\pm$ .0358} & .4599 $\pm$ .0365 & .4822 $\pm$ .0609 & .4699 $\pm$ .0433 \\*
 & .75 & \textbf{.4633 $\pm$ .0405} & .4958 $\pm$ .0305 & .5030 $\pm$ .0586 & .4933 $\pm$ .0462 \\*
 & .80 & \textbf{.4905 $\pm$ .0381} & .5168 $\pm$ .0317 & .5160 $\pm$ .0526 & .5208 $\pm$ .0453 \\*
 & .85 & \textbf{.5078 $\pm$ .0345} & .5567 $\pm$ .0496 & .5368 $\pm$ .0551 & .5395 $\pm$ .0463 \\*
 & .90 & \textbf{.5329 $\pm$ .0368} & .5779 $\pm$ .0551 & .5645 $\pm$ .0503 & .5575 $\pm$ .0457 \\*
 & .95 & \textbf{.5590 $\pm$ .0331} & .5918 $\pm$ .0505 & .5839 $\pm$ .0504 & .5815 $\pm$ .0524 \\*
\cline{1-6}
{\tt temperature} & .70 & \textbf{.6102 $\pm$ .1080} & .7435 $\pm$ .2824 & .9294 $\pm$ .4112 & .7884 $\pm$ .3538 \\*
 & .75 & \textbf{.6666 $\pm$ .1562} & .7617 $\pm$ .2836 & .9249 $\pm$ .4060 & .8035 $\pm$ .3532 \\*
 & .80 & \textbf{.6923 $\pm$ .1644} & .8074 $\pm$ .2969 & .9249 $\pm$ .4017 & .8273 $\pm$ .3476 \\*
 & .85 & \textbf{.7259 $\pm$ .1787} & .8160 $\pm$ .2989 & .9233 $\pm$ .3949 & .8340 $\pm$ .3524 \\*
 & .90 & \textbf{.7757 $\pm$ .1708} & .8423 $\pm$ .3285 & .9145 $\pm$ .3885 & .8644 $\pm$ .3659 \\*
 & .95 & \textbf{.8270 $\pm$ .1648} & .8658 $\pm$ .3315 & .9037 $\pm$ .3784 & .8837 $\pm$ .3703 \\*
\bottomrule
\end{longtable}

\begin{figure}[h!]
    \centering
    \includegraphics[width=1\linewidth]{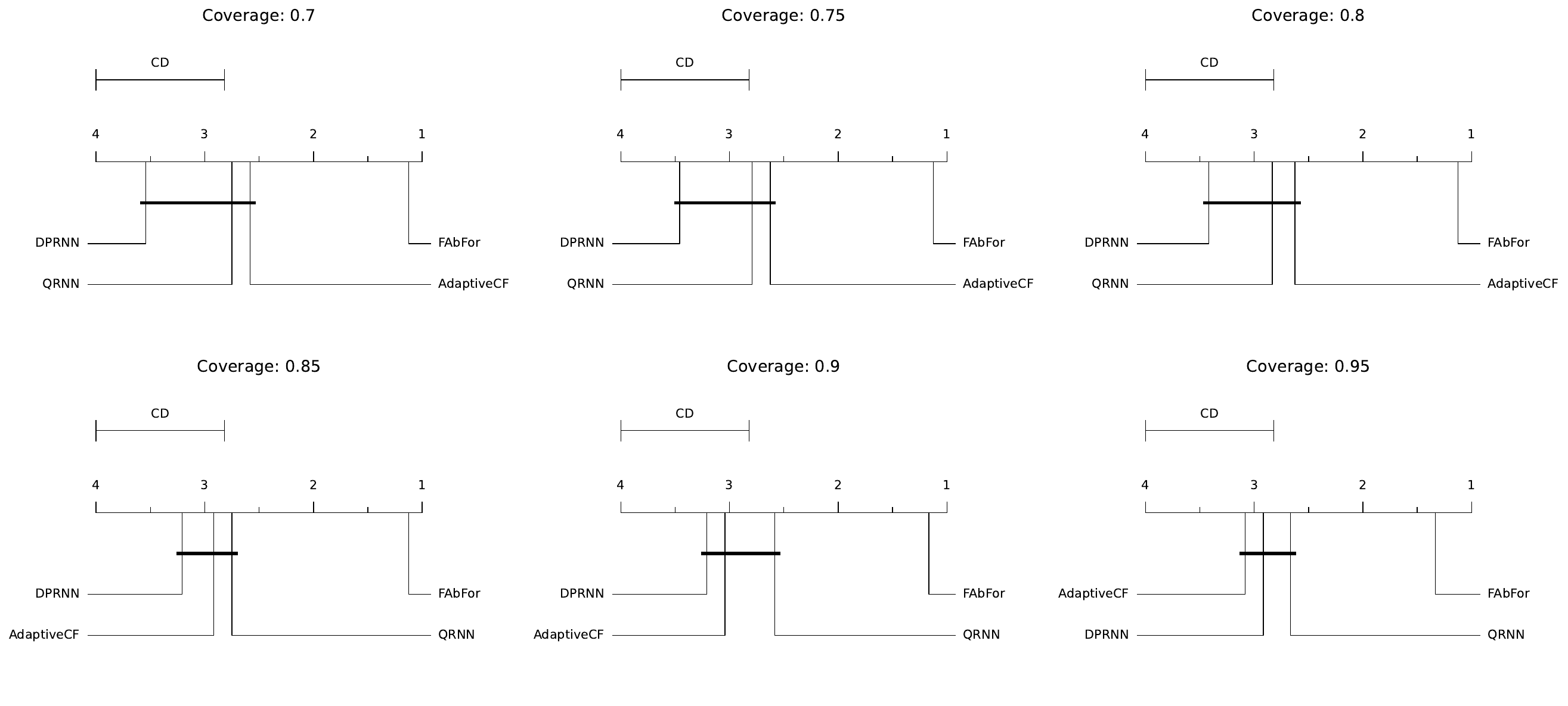}
    \caption{CD plots of empirical selective risk for different target coverages. \firstmethod significantly outperforms the baselines for all considered target coverages.}
    \label{fig:Q1cd}
\end{figure}

\subsection{Q2 : detailed results}\label{sec:Q2app}
For completeness, \Figref{fig:Q2app} reports each dataset's average selective risk $\calR$ for our three proposed strategies and \dummy across six target coverage levels. We complement this graphical representation with~\Tabref{tab:Q2app}, which provides the detailed numerical values, reporting each dataset's average empirical selective risk ($\pm$ std) for the considered methods.

\begin{longtable}{l|c|ccc|c}
\caption{Each dataset's average empirical selective risk ($\pm$ std) for all methods across six target coverages in the full abstention setting. \firstmethod achieves a lower selective risk than the baselines in most of the dataset-coverage pairs.}
  \label{tab:Q2app}\\
\toprule
  dataset & c & \firstmethod & \secondmethod & \thirdmethod & \dummy \\*
\cline{1-6}
\endfirsthead
\toprule
  dataset & c & \firstmethod & \secondmethod & \thirdmethod & \dummy \\*
\cline{1-6}
\endhead 
{\tt chlorine} & .70 & .0048 $\pm$ .0003 & .0038 $\pm$ .0002 & \textbf{.0034 $\pm$ .0003} & .0105 $\pm$ .0034 \\*
 & .75 & .0049 $\pm$ .0003 & .0040 $\pm$ .0002 & \textbf{.0036 $\pm$ .0003} & .0103 $\pm$ .0032 \\*
 & .80 & .0051 $\pm$ .0003 & .0042 $\pm$ .0003 & \textbf{.0039 $\pm$ .0003} & .0101 $\pm$ .0030 \\*
 & .85 & .0053 $\pm$ .0003 & .0045 $\pm$ .0002 & \textbf{.0042 $\pm$ .0003} & .0100 $\pm$ .0028 \\*
 & .90 & .0056 $\pm$ .0004 & .0049 $\pm$ .0003 & \textbf{.0046 $\pm$ .0004} & .0100 $\pm$ .0026 \\*
 & .95 & .0059 $\pm$ .0004 & .0056 $\pm$ .0003 & \textbf{.0052 $\pm$ .0004} & .0101 $\pm$ .0025 \\*
\cline{1-6}
{\tt ERA5} & .70 & .0044 $\pm$ .0006 & \textbf{.0043 $\pm$ .0005} & \textbf{.0043 $\pm$ .0006} & .0160 $\pm$ .0015 \\*
 & .75 & .0056 $\pm$ .0009 & \textbf{.0055 $\pm$ .0007} & \textbf{.0055 $\pm$ .0007} & .0163 $\pm$ .0016 \\*
 & .80 & .0068 $\pm$ .0007 & \textbf{.0067 $\pm$ .0008} & .0069 $\pm$ .0008 & .0164 $\pm$ .0015 \\*
 & .85 & .0089 $\pm$ .0014 & \textbf{.0085 $\pm$ .0011} & .0086 $\pm$ .0011 & .0166 $\pm$ .0015 \\*
 & .90 & .0108 $\pm$ .0015 & \textbf{.0106 $\pm$ .0014} & .0107 $\pm$ .0014 & .0168 $\pm$ .0015 \\*
 & .95 & .0135 $\pm$ .0020 & .0134 $\pm$ .0019 & \textbf{.0133 $\pm$ .0019} & .0169 $\pm$ .0016 \\*
\cline{1-6}
{\tt handoutlines} & .70 & .0045 $\pm$ .0021 & \textbf{.0041 $\pm$ .0020} & .0044 $\pm$ .0023 & .0140 $\pm$ .0075 \\*
 & .75 & .0051 $\pm$ .0027 & \textbf{.0046 $\pm$ .0023} & .0050 $\pm$ .0026 & .0154 $\pm$ .0083 \\*
 & .80 & .0057 $\pm$ .0030 & \textbf{.0053 $\pm$ .0025} & \textbf{.0053 $\pm$ .0027} & .0172 $\pm$ .0095 \\*
 & .85 & .0062 $\pm$ .0029 & \textbf{.0057 $\pm$ .0025} & .0058 $\pm$ .0031 & .0187 $\pm$ .0104 \\*
 & .90 & .0064 $\pm$ .0031 & \textbf{.0063 $\pm$ .0030} & .0064 $\pm$ .0035 & .0207 $\pm$ .0117 \\*
 & .95 & .0084 $\pm$ .0062 & .0086 $\pm$ .0057 & \textbf{.0080 $\pm$ .0056} & .0227 $\pm$ .0129 \\*
\cline{1-6}
{\tt freezer} & .70 & .0026 $\pm$ .0026 & \textbf{.0023 $\pm$ .0024} & .0057 $\pm$ .0053 & .0592 $\pm$ .0225 \\*
 & .75 & .0029 $\pm$ .0026 & \textbf{.0022 $\pm$ .0022} & .0055 $\pm$ .0045 & .0599 $\pm$ .0224 \\*
 & .80 & .0106 $\pm$ .0139 & \textbf{.0023 $\pm$ .0021} & .0062 $\pm$ .0051 & .0610 $\pm$ .0211 \\*
 & .85 & .0170 $\pm$ .0141 & .0105 $\pm$ .0133 & \textbf{.0065 $\pm$ .0049} & .0618 $\pm$ .0203 \\*
 & .90 & .0238 $\pm$ .0139 & .0150 $\pm$ .0131 & \textbf{.0113 $\pm$ .0131} & .0624 $\pm$ .0202 \\*
 & .95 & .0340 $\pm$ .0154 & \textbf{.0271 $\pm$ .0160} & .0282 $\pm$ .0183 & .0630 $\pm$ .0198 \\*
\cline{1-6}
{\tt wafer} & .70 & \textbf{.0002 $\pm$ .0001} & .0003 $\pm$ .0003 & .0003 $\pm$ .0005 & .0133 $\pm$ .0045 \\*
 & .75 & \textbf{.0002 $\pm$ .0001} & .0004 $\pm$ .0007 & .0005 $\pm$ .0011 & .0133 $\pm$ .0045 \\*
 & .80 & .0007 $\pm$ .0014 & \textbf{.0004 $\pm$ .0007} & .0005 $\pm$ .0010 & .0133 $\pm$ .0047 \\*
 & .85 & .0010 $\pm$ .0014 & \textbf{.0005 $\pm$ .0009} & .0006 $\pm$ .0012 & .0132 $\pm$ .0045 \\*
 & .90 & .0012 $\pm$ .0013 & .0010 $\pm$ .0012 & \textbf{.0009 $\pm$ .0013} & .0132 $\pm$ .0047 \\*
 & .95 & .0014 $\pm$ .0013 & \textbf{.0013 $\pm$ .0013} & .0015 $\pm$ .0019 & .0133 $\pm$ .0048 \\*
\cline{1-6}
{\tt ecg} & .70 & .0593 $\pm$ .0043 & .0423 $\pm$ .0043 & \textbf{.0419 $\pm$ .0044} & .0599 $\pm$ .0051 \\*
 & .75 & .0631 $\pm$ .0045 & .0475 $\pm$ .0036 & \textbf{.0467 $\pm$ .0044} & .0650 $\pm$ .0055 \\*
 & .80 & .0683 $\pm$ .0065 & .0534 $\pm$ .0028 & \textbf{.0527 $\pm$ .0038} & .0703 $\pm$ .0061 \\*
 & .85 & .0736 $\pm$ .0072 & .0607 $\pm$ .0030 & \textbf{.0603 $\pm$ .0039} & .0772 $\pm$ .0069 \\*
 & .90 & .0839 $\pm$ .0072 & .0703 $\pm$ .0029 & \textbf{.0699 $\pm$ .0050} & .0831 $\pm$ .0072 \\*
 & .95 & .0957 $\pm$ .0072 & .0843 $\pm$ .0058 & \textbf{.0842 $\pm$ .0061} & .0978 $\pm$ .0080 \\*
\cline{1-6}
{\tt ITpower} & .70 & .0669 $\pm$ .0104 & .0689 $\pm$ .0081 & \textbf{.0639 $\pm$ .0065} & .1372 $\pm$ .0172 \\*
 & .75 & .0770 $\pm$ .0126 & .0772 $\pm$ .0098 & \textbf{.0680$\pm$ .0087} & .1340 $\pm$ .0166 \\*
 & .80 & .0826 $\pm$ .0109 & .0807 $\pm$ .0095 & \textbf{.0727 $\pm$ .0105} & .1305 $\pm$ .0163 \\*
 & .85 & .0883 $\pm$ .0125 & .0872 $\pm$ .0110 & \textbf{.0789 $\pm$ .0104} & .1284 $\pm$ .0155 \\*
 & .90 & .0972 $\pm$ .0097 & .0956 $\pm$ .0108 & \textbf{.0872 $\pm$ .0129} & .1253 $\pm$ .0155 \\*
 & .95 & .1079 $\pm$ .0126 & .1078 $\pm$ .0104 & \textbf{.0999 $\pm$ .0120} & .1233 $\pm$ .0146 \\*
\cline{1-6}
{\tt melbourne} & .70 & .0336 $\pm$ .0074 & .0307 $\pm$ .0061 & \textbf{.0305 $\pm$ .0058} & .1171 $\pm$ .0414 \\*
 & .75 & .0437 $\pm$ .0135 & \textbf{.0341 $\pm$ .0061} & .0363 $\pm$ .0091 & .1313 $\pm$ .0552 \\*
 & .80 & .0542 $\pm$ .0188 & \textbf{.0439 $\pm$ .0090} & .0440 $\pm$ .0092 & .1359 $\pm$ .0505 \\*
 & .85 & .0598 $\pm$ .0173 & \textbf{.0564 $\pm$ .0160} & \textbf{.0564 $\pm$ .0160} & .1436 $\pm$ .0542 \\*
 & .90 & .0901 $\pm$ .0599 & \textbf{.0675 $\pm$ .0154} & .0677 $\pm$ .0152 & .1490 $\pm$ .0541 \\*
 & .95 & .1070 $\pm$ .0607 & \textbf{.1010 $\pm$ .0576} & \textbf{.1010 $\pm$ .0576} & .1601 $\pm$ .0591 \\*
\cline{1-6}
{\tt mixedshapes} & .70 & .0291 $\pm$ .0033 & \textbf{.0198 $\pm$ .0032} & \textbf{.0198 $\pm$ .0026} & .0423 $\pm$ .0043 \\*
 & .75 & .0322 $\pm$ .0034 & \textbf{.0233 $\pm$ .0031} & \textbf{.0233 $\pm$ .0024} & .0464 $\pm$ .0047 \\*
 & .80 & .0358 $\pm$ .0028 & \textbf{.0275 $\pm$ .0028} & .0276 $\pm$ .0023 & .0505 $\pm$ .0051 \\*
 & .85 & .0393 $\pm$ .0027 & \textbf{.0326 $\pm$ .0026} & .0328 $\pm$ .0023 & .0543 $\pm$ .0056 \\*
 & .90 & .0435 $\pm$ .0025 & \textbf{.0388 $\pm$ .0023} & .0389 $\pm$ .0028 & .0586 $\pm$ .0063 \\*
 & .95 & .0506 $\pm$ .0042 & \textbf{.0467 $\pm$ .0031} & .0474 $\pm$ .0041 & .0622 $\pm$ .0063 \\*
\cline{1-6}
{\tt symbols} & .70 & .0263 $\pm$ .0133 & \textbf{.0233 $\pm$ .0105} & \textbf{.0233 $\pm$ .0105} & .0356 $\pm$ .0085 \\*
 & .75 & .0260 $\pm$ .0124 & .0245 $\pm$ .011 & \textbf{.0244 $\pm$ .0110} & .0371 $\pm$ .0088 \\*
 & .80 & .0273 $\pm$ .0113 & \textbf{.0263 $\pm$ .0108} & \textbf{.0263 $\pm$ .0108} & .0384 $\pm$ .0091 \\*
 & .85 & .0295 $\pm$ .0098 & \textbf{.0284 $\pm$ .0101} & \textbf{.0284 $\pm$ .0100} & .0394 $\pm$ .0092 \\*
 & .90 & .0311 $\pm$ .0092 & \textbf{.0308 $\pm$ .0096} & .0310 $\pm$ .0095 & .0406 $\pm$ .0094 \\*
 & .95 & \textbf{.0344 $\pm$ .0093} & .0359 $\pm$ .0095 & .0359 $\pm$ .0095 & .0416 $\pm$ .0096 \\*
\cline{1-6}
{\tt yoga} & .70 & .0194 $\pm$ .0039 & \textbf{.0144 $\pm$ .0034} & .0145 $\pm$ .0034 & .0205 $\pm$ .0039 \\*
 & .75 & .0202 $\pm$ .0041 & \textbf{.0155 $\pm$ .0035} & .0156 $\pm$ .0035 & .0221 $\pm$ .0040 \\*
 & .80 & .0215 $\pm$ .0042 & \textbf{.0169 $\pm$ .0035} & .0170 $\pm$ .0035 & .0239 $\pm$ .0043 \\*
 & .85 & .0227 $\pm$ .0042 & \textbf{.0187 $\pm$ .0036} & .0188 $\pm$ .0036 & .0257 $\pm$ .0046 \\*
 & .90 & .0243 $\pm$ .0045 & \textbf{.0210 $\pm$ .0040} & .0211 $\pm$ .0041 & .0276 $\pm$ .0050 \\*
 & .95 & .0255 $\pm$ .0048 & \textbf{.0240 $\pm$ .0046} & .0242 $\pm$ .0047 & .0300 $\pm$ .0005 \\*
\cline{1-6}
{\tt covid} & .70 & .2950 $\pm$ .2312 & .2927 $\pm$ .2304 & \textbf{.1974 $\pm$ .1894} & .3663 $\pm$ .1642 \\*
 & .75 & .2959 $\pm$ .1931 & .2900 $\pm$ .1879 & \textbf{.2075 $\pm$ .1765} & .3761 $\pm$ .1600 \\*
 & .80 & .2970 $\pm$ .1755 & .2929 $\pm$ .1823 & \textbf{.2199 $\pm$ .1670} & .3946 $\pm$ .1493 \\*
 & .85 & .3002 $\pm$ .1653 & .3058 $\pm$ .1651 & \textbf{.2405 $\pm$ .1612} & .4131 $\pm$ .1391 \\*
 & .90 & .3162 $\pm$ .1599 & .3176 $\pm$ .1586 & \textbf{.2896 $\pm$ .1397} & .4594 $\pm$ .1500 \\*
 & .95 & .3464 $\pm$ .1417 & \textbf{.3281 $\pm$ .1502} & .3462 $\pm$ .1207 & .4536 $\pm$ .1658 \\*
\cline{1-6}
{\tt eeg} & .70 & .1624 $\pm$ .0053 & \textbf{.1602 $\pm$ .0054} & .1603 $\pm$ .0056 & .3335 $\pm$ .0384 \\*
 & .75 & .1737 $\pm$ .0053 & \textbf{.1711 $\pm$ .0057} & \textbf{.1711 $\pm$ .0059} & .3398 $\pm$ .0389 \\*
 & .80 & .1861 $\pm$ .0057 & \textbf{.1837 $\pm$ .0055} & .1840 $\pm$ .0057 & .3436 $\pm$ .0381 \\*
 & .85 & .1994 $\pm$ .0067 & \textbf{.1966 $\pm$ .0066} & .1969 $\pm$ .0064 & .3481 $\pm$ .0367 \\*
 & .90 & .2134 $\pm$ .0076 & \textbf{.2118 $\pm$ .0067} & .2125 $\pm$ .0067 & .3513 $\pm$ .0358 \\*
 & .95 & .2379 $\pm$ .0104 & \textbf{.2336 $\pm$ .0098} & .2349 $\pm$ .0091 & .3516 $\pm$ .0336 \\*
\cline{1-6}
{\tt fiftywords} & .70 & .2781 $\pm$ .1311 & \textbf{.1865 $\pm$ .0623} & .2039 $\pm$ .0720 & .3784 $\pm$ .0799 \\*
 & .75 & .2822 $\pm$ .1250 & \textbf{.2153 $\pm$ .0724} & .2339 $\pm$ .0757 & .4018 $\pm$ .0859 \\*
 & .80 & .3025 $\pm$ .1158 & \textbf{.2572 $\pm$ .0913} & .2808 $\pm$ .0987 & .4245 $\pm$ .0940 \\*
 & .85 & .3358 $\pm$ .1096 & \textbf{.2980 $\pm$ .1048} & .3174 $\pm$ .1100 & .4460 $\pm$ .1011 \\*
 & .90 & .3641 $\pm$ .1081 & \textbf{.3388 $\pm$ .1011} & .3606 $\pm$ .1227 & .4666 $\pm$ .1041 \\*
 & .95 & .4168 $\pm$ .0917 & \textbf{.4098 $\pm$ .0968} & .4140 $\pm$ .1227 & .4852 $\pm$ .1051 \\*
\cline{1-6}
{\tt wordrecognition} & .70 & .1366 $\pm$ .0266 & \textbf{.1354 $\pm$ .0248} & .1448 $\pm$ .0233 & .1616 $\pm$ .0159 \\*
 & .75 & .1464 $\pm$ .0232 & \textbf{.1402 $\pm$ .0248} & .1505 $\pm$ .0230 & .1662 $\pm$ .0175 \\*
 & .80 & .1524 $\pm$ .0235 & \textbf{.1517 $\pm$ .0229} & .1564 $\pm$ .0233 & .1698 $\pm$ .0171 \\*
 & .85 & .1581 $\pm$ .0236 & \textbf{.1572 $\pm$ .0220} & .1609 $\pm$ .0244 & .1734 $\pm$ .0189 \\*
 & .90 & \textbf{.1641 $\pm$ .0247} & .1654 $\pm$ .0237 & .1658 $\pm$ .0241 & .1782 $\pm$ .0193 \\*
 & .95 & \textbf{.1700 $\pm$ .0228} & .1711 $\pm$ .0227 & .1715 $\pm$ .0232 & .1819 $\pm$ .0204 \\*
\cline{1-6}
{\tt middlephalanx} & .70 & .1180 $\pm$ .0320 & .1112 $\pm$ .0232 & \textbf{.1107 $\pm$ .0211} & .1950 $\pm$ .0256 \\*
 & .75 & .1233 $\pm$ .0307 & \textbf{.1205 $\pm$ .0205} & .1208 $\pm$ .0155 & .2013 $\pm$ .0266 \\*
 & .80 & .1304 $\pm$ .0286 & \textbf{.1273 $\pm$ .0195} & .1292 $\pm$ .0170 & .2024 $\pm$ .0261 \\*
 & .85 & .1405 $\pm$ .0274 & \textbf{.1354 $\pm$ .0206} & .1421 $\pm$ .0254 & .1999 $\pm$ .0256 \\*
 & .90 & .1505 $\pm$ .0276 & \textbf{.1467 $\pm$ .0258} & .1481 $\pm$ .0260 & .1969 $\pm$ .0255 \\*
 & .95 & .1654 $\pm$ .0240 & \textbf{.1580 $\pm$ .0252} & .1601 $\pm$ .0261 & .1918 $\pm$ .0257 \\*
\cline{1-6}
{\tt swedishleaf} & .70 & .1735 $\pm$ .0432 & .1694 $\pm$ .0393 & \textbf{.1465 $\pm$ .0232} & .4847 $\pm$ .0597 \\*
 & .75 & .2176 $\pm$ .0461 & .2045 $\pm$ .0407 & \textbf{.1815 $\pm$ .0289} & .4946 $\pm$ .0588 \\*
 & .80 & .2664 $\pm$ .0490 & .2425 $\pm$ .0507 & \textbf{.2228 $\pm$ .0249} & .5040 $\pm$ .0588 \\*
 & .85 & .3164 $\pm$ .0440 & .2972 $\pm$ .0433 & \textbf{.2602 $\pm$ .0267} & .5113 $\pm$ .0566 \\*
 & .90 & .3641 $\pm$ .0462 & .3571 $\pm$ .0461 & \textbf{.3277 $\pm$ .0315} & .5126 $\pm$ .0543 \\*
 & .95 & .4369 $\pm$ .0369 & .4346 $\pm$ .0417 & \textbf{.4097 $\pm$ .0453} & .5152 $\pm$ .0528 \\*
\cline{1-6}
{\tt faceall} & .70 & .3953 $\pm$ .0438 & .3868 $\pm$ .0438 & \textbf{.3855 $\pm$ .0416} & .7552 $\pm$ .1072 \\*
 & .75 & .4160 $\pm$ .0408 & \textbf{.4109 $\pm$ .0389} & .4137 $\pm$ .0429 & .7567 $\pm$ .1033 \\*
 & .80 & .4333 $\pm$ .0395 & \textbf{.4307 $\pm$ .0477} & .4372 $\pm$ .0490 & .7562 $\pm$ .0998 \\*
 & .85 & .4588 $\pm$ .0393 & \textbf{.4529 $\pm$ .0544} & .4633 $\pm$ .0541 & .7551 $\pm$ .0996 \\*
 & .90 & \textbf{.4865 $\pm$ .0371} & .4885 $\pm$ .0497 & .4972 $\pm$ .0495 & .7518 $\pm$ .0973 \\*
 & .95 & \textbf{.5446 $\pm$ .0466} & .5552 $\pm$ .0601 & .5689 $\pm$ .0650 & .7504 $\pm$ .0927 \\*
\cline{1-6}
{\tt fetalecg} & .70 & .3919 $\pm$ .0863 & \textbf{.3261 $\pm$ .0576} & .3262 $\pm$ .0575 & .4072 $\pm$ .0737 \\*
 & .75 & .4058 $\pm$ .0956 & \textbf{.3446 $\pm$ .0624} & .3447 $\pm$ .0619 & .4222 $\pm$ .0799 \\*
 & .80 & .4141 $\pm$ .1007 & .3665 $\pm$ .0681 & \textbf{.3643 $\pm$ .0669} & .4359 $\pm$ .0861 \\*
 & .85 & .4204 $\pm$ .0996 & .3879 $\pm$ .0776 & \textbf{.3870 $\pm$ .0773} & .4468 $\pm$ .0918 \\*
 & .90 & .4317 $\pm$ .0983 & .4116 $\pm$ .0822 & \textbf{.4103 $\pm$ .0831} & .4571 $\pm$ .0956 \\*
 & .95 & .4414 $\pm$ .0963 & .4415 $\pm$ .0973 & \textbf{.4357 $\pm$ .0926} & .4687 $\pm$ .0981 \\*
\cline{1-6}
{\tt ford} & .70 & .4816 $\pm$ .0162 & .4199 $\pm$ .0162 & \textbf{.4180 $\pm$ .0114} & .4639 $\pm$ .0093 \\*
 & .75 & .4958 $\pm$ .0159 & .4490 $\pm$ .0138 & \textbf{.4463 $\pm$ .0116} & .4814 $\pm$ .0098 \\*
 & .80 & .5109 $\pm$ .0156 & .4731 $\pm$ .0126 & \textbf{.4718 $\pm$ .0114} & .4989 $\pm$ .0096 \\*
 & .85 & .5229 $\pm$ .0175 & .4993 $\pm$ .0143 & \textbf{.4976 $\pm$ .0120} & .5164 $\pm$ .0091 \\*
 & .90 & .5338 $\pm$ .0168 & .5216 $\pm$ .0161 & \textbf{.5190 $\pm$ .0122} & .5338 $\pm$ .0104 \\*
 & .95 & .5482 $\pm$ .0186 & .5423 $\pm$ .0169 & \textbf{.5400 $\pm$ .0130} & .5496 $\pm$ .0125 \\*
\cline{1-6}
{\tt refrigeration} & .70 & .5739 $\pm$ .0763 & .5203 $\pm$ .0779 & \textbf{.5138 $\pm$ .0785} & .6479 $\pm$ .0603 \\*
 & .75 & .5896 $\pm$ .0732 & .5462 $\pm$ .0845 & \textbf{.5431 $\pm$ .0759} & .6560 $\pm$ .0590 \\*
 & .80 & .6071 $\pm$ .0840 & .5767 $\pm$ .0778 & \textbf{.5694 $\pm$ .0766} & .6665 $\pm$ .0579 \\*
 & .85 & .6181 $\pm$ .0829 & .6087 $\pm$ .0766 & \textbf{.6083 $\pm$ .0713} & .6762 $\pm$ .0556 \\*
 & .90 & \textbf{.6319 $\pm$ .0746} & .6335 $\pm$ .0681 & .6345 $\pm$ .0642 & .6874 $\pm$ .0544 \\*
 & .95 & \textbf{.6607 $\pm$ .0778} & .6669 $\pm$ .0655 & .6711 $\pm$ .0644 & .6997 $\pm$ .0570 \\*
\cline{1-6}
{\tt sonyaiborobot} & .70 & .4444 $\pm$ .0358 & .3373 $\pm$ .0404 & \textbf{.2794 $\pm$ .0381} & .5744 $\pm$ .0463 \\*
 & .75 & .4633 $\pm$ .0405 & .3675 $\pm$ .0432 & \textbf{.3344 $\pm$ .0371} & .5674 $\pm$ .0445 \\*
 & .80 & .4905 $\pm$ .0381 & .4141 $\pm$ .0324 & \textbf{.3847 $\pm$ .0457} & .5607 $\pm$ .0420 \\*
 & .85 & .5078 $\pm$ .0345 & .4470 $\pm$ .0316 & \textbf{.4305 $\pm$ .0435} & .5707 $\pm$ .0390 \\*
 & .90 & .5329 $\pm$ .0368 & .4839 $\pm$ .0317 & \textbf{.4736 $\pm$ .0343} & .5748 $\pm$ .0366 \\*
 & .95 & .5590 $\pm$ .0331 & \textbf{.5248 $\pm$ .0334} & .5277 $\pm$ .0325 & .5737 $\pm$ .0344 \\*
\cline{1-6}
{\tt temperature} & .70 & .6102 $\pm$ .1080 & .4934 $\pm$ .0958 & \textbf{.4919 $\pm$ .0950} & .6720 $\pm$ .1296 \\*
 & .75 & .6666 $\pm$ .1562 & .5364 $\pm$ .0920 & \textbf{.5363 $\pm$ .0922} & .7062 $\pm$ .1364 \\*
 & .80 & .6923 $\pm$ .1644 & \textbf{.5725 $\pm$ .1011} & .5728 $\pm$ .1011 & .7421 $\pm$ .1454 \\*
 & .85 & .7259 $\pm$ .1787 & \textbf{.6142 $\pm$ .1010} & .6146 $\pm$ .1001 & .7776 $\pm$ .1535 \\*
 & .90 & .7757 $\pm$ .1708 & .6775 $\pm$ .1253 & \textbf{.6771 $\pm$ .1239} & .8048 $\pm$ .1572 \\*
 & .95 & .8270 $\pm$ .1648 & .7523 $\pm$ .1433 & \textbf{.7509 $\pm$ .1452} & .8394 $\pm$ .1650 \\*
\bottomrule
\end{longtable}
\begin{figure*}[!t]
  \centering
  \includegraphics[width=1\textwidth]{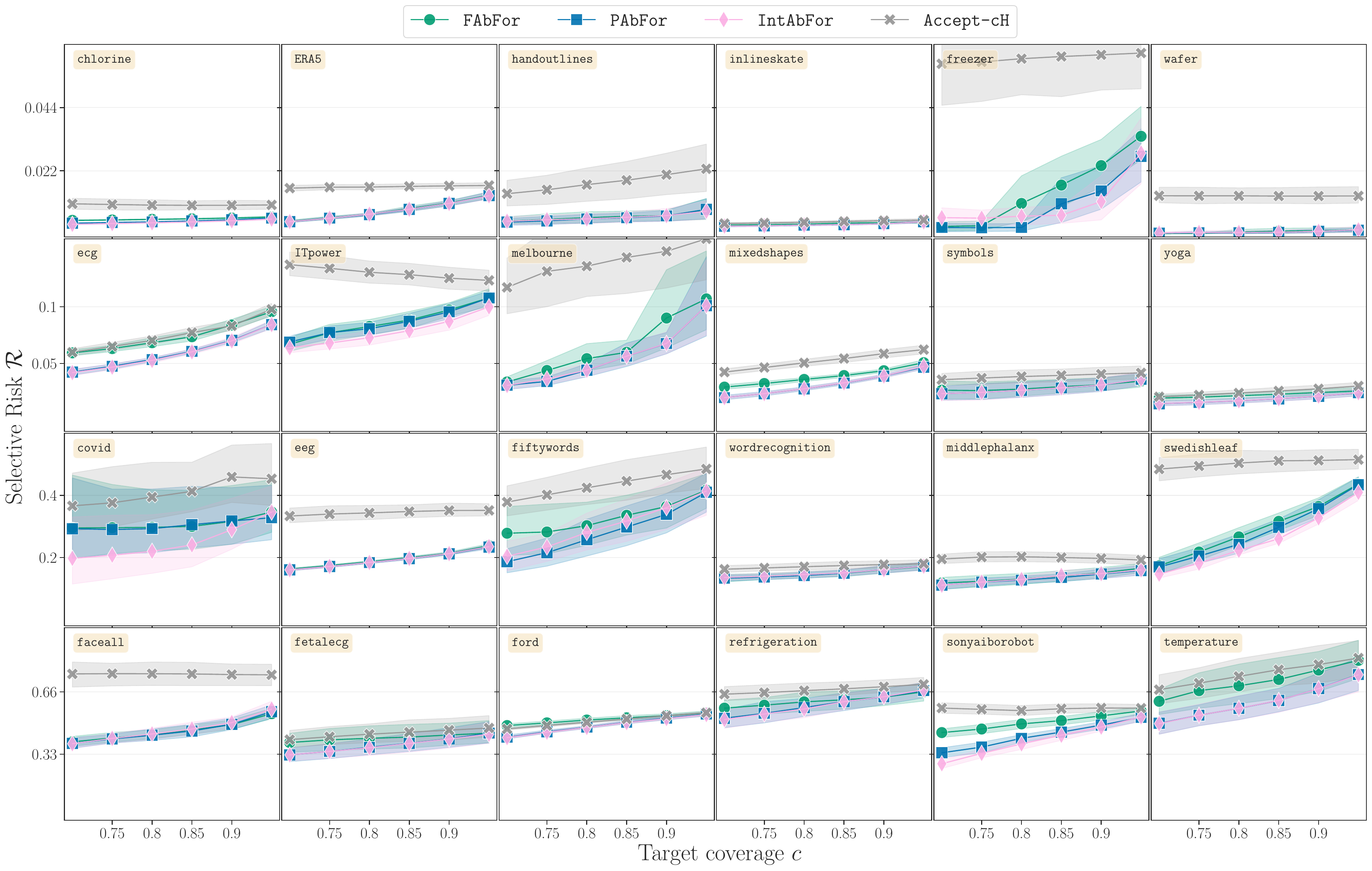}
  \caption{
  Each dataset's average selective risk per step for \firstmethod, \secondmethod, \thirdmethod, and \dummy across six target coverages. 
  }
  \label{fig:Q2app}
\end{figure*}